\documentclass{article}
\usepackage{microtype}
\usepackage{graphicx}
\usepackage{subfigure}
\usepackage{tikz}
\usepackage{lipsum}
\usepackage{adjustbox}
\usepackage{booktabs} % for professional tables
\usepackage{pgf}
\usepackage{multirow}
\usepackage{threeparttable}
\usepackage{float}
\usepackage{natbib}
\usepackage{hyperref}
\usepackage{amsmath}
\usepackage{amssymb}
\usepackage{mathtools}
\usepackage{amsthm}
\usepackage{tikz-cd}
\usepackage{algorithm}
\usepackage{algorithmic}
\usetikzlibrary{positioning, calc, decorations.pathreplacing}
\newtheorem{theorem}{Theorem}
\newtheorem{proposition}[theorem]{Proposition}

\newtheorem{definition}{Definition}

\newtheorem{remark}{Remark}
\newtheorem{example}{Example}[section]
\usepackage{mathtools, cuted}

\newcommand{\indep}{\perp\!\!\!\perp}
\newcommand{\E}{\mathbb{E}}

\usepackage{cuted}
\usepackage[preprint]{icml2026}

\usepackage[capitalize,noabbrev]{cleveref}

\usepackage[textsize=tiny]{todonotes}

\icmltitlerunning{Perturbative methods for non-parametric instrumental variable}

\begin{document}

\twocolumn[
  \icmltitle{Perturbative methods for non-parametric instrumental variable}

  % It is OKAY to include author information, even for blind submissions: the
  % style file will automatically remove it for you unless you've provided
  % the [accepted] option to the icml2026 package.

  % List of affiliations: The first argument should be a (short) identifier you
  % will use later to specify author affiliations Academic affiliations
  % should list Department, University, City, Region, Country Industry
  % affiliations should list Company, City, Region, Country

  % You can specify symbols, otherwise they are numbered in order. Ideally, you
  % should not use this facility. Affiliations will be numbered in order of
  % appearance and this is the preferred way.
  \icmlsetsymbol{equal}{*}

  \begin{icmlauthorlist}
    \icmlauthor{Wei Bu}{yyy}
    \icmlauthor{Arthur Gretton}{comp1}
  \end{icmlauthorlist}

  \icmlaffiliation{yyy}{Harvard University, Northeastern University, Physics Department}
  % \icmlaffiliation{xxx}{Northeastern University, Physics Department}
  \icmlaffiliation{comp1}{University College London}
  % \icmlaffiliation{comp2}{Google Deepmind}
  
  \icmlcorrespondingauthor{Wei Bu}{wbu112358@gmail.com}
  \icmlcorrespondingauthor{Arthur Gretton}{arthur.gretton@gmail.com}

  % You may provide any keywords that you find helpful for describing your
  % paper; these are used to populate the "keywords" metadata in the PDF but
  % will not be shown in the document
  \icmlkeywords{Machine Learning, ICML}

  \vskip 0.3in
]

% this must go after the closing bracket ] following \twocolumn[ ...

% This command actually creates the footnote in the first column listing the
% affiliations and the copyright notice. The command takes one argument, which
% is text to display at the start of the footnote. The \icmlEqualContribution
% command is standard text for equal contribution. Remove it (just {}) if you
% do not need this facility.

% Use ONE of the following lines. DO NOT remove the command.
% If you have no special notice, KEEP empty braces:
\printAffiliationsAndNotice{}  % no special notice (required even if empty)
% Or, if applicable, use the standard equal contribution text:
% \printAffiliationsAndNotice{\icmlEqualContribution}

\begin{abstract}
We introduce a perturbative approach for nonparametric instrumental variable (NPIV) estimation. By drawing inspiration from perturbation theory in physics, we extend standard kernel ridge methods with systematic higher perturbation order corrections that significantly improve estimation accuracy. Spectrally, the perturbation introduces mixing between different eigenmodes of the expectation integral operator, which becomes especially useful when the integral equation is ill-defined. One source for such ill-definedness can be the curse of dimensionality. Our method performs across various dimensionality regimes, particularly when the dimensionality parameter $\beta$ which is defined through the number of samples $n$ and dimension $d$ as $n^\beta = d$, becomes large. Experimental results show that our first-order perturbative corrections can reduce prediction error by up to 99\% in high-dimensional ill-defined cases ($\beta > 0.7$) compared to standard ridge regression approaches. The performance improvement is maintained across a wide range of dimensions, with the advantage becoming more pronounced as dimensionality increases. 
\end{abstract}

\section{Introduction}
Nonparametric instrumental variable (NPIV) estimation has emerged as a fundamental tool for causal inference in the presence of unmeasured confounding. The method leverages instrumental variables--variables that affect the treatment but not the outcome directly--to identify causal effects without imposing restrictive parametric assumptions on the functional form of the causal relationship. Despite its theoretical appeal, NPIV estimation remains a challenging problem especially when condition number of the expectation integral operator is high. This can be the result of multiple different factors: the presence of weak instrumental, misalignment between target and kernel spectrum and curse of dimensionality. Informatively speaking, we consider a setting to be "high dimensional" when sample size is small compared to the dimensionality of the variables. As the dimension of the feature space increases relative to sample size, estimation accuracy typically deteriorates. 

% Traditional approaches to NPIV often struggle when the dimensionality parameter $\beta$ exceeds certain thresholds. 

Classical approaches to NPIV estimation typically formulate the problem as an ill-posed integral equation, where the causal function is estimated by solving a conditional regression problem \cite{newey2003instrumental, hall2005nonparametric}. Kernel-based methods, particularly those employing reproducing kernel Hilbert spaces (RKHS), have been applied to nonparameteric estimation due to their flexibility and theoretical guarantees \cite{darolles2011, singh2019kernel}. However, these methods encounter fundamental limitations when $\beta\gg 1$ (high treatment dimension) even in the plain kernel ridge regression case with rotationally invariant kernels, leading to poorly conditioned kernel matrices and unstable estimations \cite{donhauser2022fast}.

The curse of dimensionality in kernel methods manifests in several ways: kernel matrices become increasingly ill-conditioned, with exponentially decaying eigenvalue spectrum (high condition number), which leads to worse performance \cite{steinwart2008support, donhauser2022fast} in the kernel ridge solution. This leads to ridge solutions that primarily retain high eigenvalue modes while discarding low eigenvalue directions, similar limitations have been noted in low-rank kernel approximations \cite{bach2013sharpanalysislowrankkernel}. In the context of NPIV estimation, the layer of conditional expectation adds a further layer of complication. The inherently ill-posedness of the integral equation, where small perturbations in the data can lead to large changes in the estimated causal function \cite{CARRASCO2007}. The ill-posedness of integral equations can be simply understood as attempting to recover information of the original function from the convolved function, which for a generic convolution kernel is not fully invertible. Existing regularization techniques such as Tikhonov regularization provide general suppression over all unstable eigendirections of the kernel function, can result in over-smoothed estimates that fail to capture complex nonlinear relations. This impedes the performance especially when those eigendirections align with the target causal function\footnote{Rather than kernel spectrum which we are discussing here, misaligned eigenspectrum in the conditional integral operator in IV was studied in \cite{meunier2025demystifyingspectralfeaturelearning}.}. 

Recent advances in understanding the fundamental limits of NPIV estimation have highlighted the polynomial approximation barrier where standard kernel methods require sample sizes that grow polynomially with dimension to maintain estimation accuracy \cite{donhauser2022fast}. This limit has motivated the development of alternative approaches, including deep learning methods \cite{xu2023learningdeepfeaturesinstrumental, kim2025optimalityadaptivitydeepneural} and other regularization schemes \cite{muandet2020dual}. 

In this paper, we introduce a novel perturbative renormalization approach to NPIV estimation that addresses the situations where the kernel matrix is low ranked or with high condition number (caused mainly by high dimensionality) while preserving the kernel framework's desirable properties. Our method draws inspiration from quantum physics, where perturbative expansions and renormalization techniques are used to handle divergent series and extract finite meaningful results from seemingly intractable calculations \cite{Peskin:1995ev, deFariadeMelo2010}. The key insight is that the kernel coefficients in NPIV estimation can be expanded as a power series in a coupling parameter, allowing for systematic correction of the base kernel ridge regression solution. Our approach consists of three main components: $(1)$ a perturbative expansion that introduces higher-order moment interactions to capture complex dependencies in high dimensional spaces, $(2)$ a renormalization procedure that tames the expansion by adaptively rescaling the higher order terms in the power series, and $(3)$ a resurgence technique that handles cases where the naive perturbative series becomes factorially divergent. From a spectral perspective, we demonstrate how these higher moment interactions induce perturbative solution effectively boost the contribution of eigenmodes with small eigenvalues and nonlinearly mix contributions from different eigenmodes\footnote{A different approach attempting to regularizing the kernel eigenmodes were discussed in the context of kernel MMD flow \cite{chen2024regularized, hagrass2024spectralregularizedkerneltwosample}.}. 

We demonstrate the effectiveness of our approach through extensive experiments on challenging high-dimensional NPIV problems. Our results show that Gaussian RBF kernels see minimal improvement due to their rotational invariance which almost zero them out in high dimensions, the family of fractional Brownian kernels \cite{Sejdinovic_2013} achieve substantial performance gains, with improvements of up to $99\%$ in mean square error compared to base kernel ridge regression. The method is particularly effective in regimes where the dimensionality grows rapidly with sample size, precisely where traditional NPIV methods struggle most.
Our approach opens new possibilities for tackling ill-conditioned problems in causal inference and suggests broader applications of perturbative methods in machine learning.

This paper is structured in the following way. In section \ref{sec:causal framework} and \ref{sec:NPIV_basic}, we introduce the basic approach to NPIV using kernel ridge regression. In section \ref{sec:perturbative_approach}, we describe the novel perturbative corrections we add to standard ridge regression and how we regularize the result. In section \ref{sec:implementation}, we discuss the implementations of the perturbative method and in section \ref{sec:experiments} we present and discuss the experiments. 
%%%%%%%%%%%%%%%%%%%%%%%%%
\subsection{Related works}

Nonparametric instrumental variable estimation has a rich history in econometrics and statistics. Early work by \cite{newey2003instrumental} and \cite{hall2005nonparametric} established the theoretical foundations, while subsequent research has focused on addressing the ill-posedness of the inverse problem inherent in NPIV. Kernel-based approaches to NPIV have been developed by \cite{singh2019kernel} and \cite{muandet2020dual}, leveraging reproducing kernel Hilbert spaces (RKHS) to represent the unknown structural function. These methods typically rely on Tikhonov regularization to ensure stability, but their performance degrades rapidly in high dimensions. The challenges of high-dimensional kernel ridge regression were systematically analyzed by \cite{donhauser2022fast}, who established minimax optimal rates and identified fundamental limits on the performance of polynomial approximation methods when the dimensionality parameter $\beta$ exceeds certain thresholds. Their analysis showed that standard methods face a "polynomial approximation barrier" as dimensionality increases. Various approaches have been proposed to address high-dimensional challenges in related contexts. \cite{belloni2015sparsemodelsmethodsoptimal} developed LASSO-based methods for high-dimensional instrumental variables, focusing on sparse linear models. Neural network approaches have been explored by \cite{hartford2017deep} and \cite{bennett2020deepgeneralizedmethodmoments, xu2023learningdeepfeaturesinstrumental, kim2025optimalityadaptivitydeepneural}, leveraging the representation power of deep learning for instrumental variable estimation. 

Our perturbative renormalization approach differs from existing methods by systematically incorporating higher-order corrections while ensuring stability through quantum field theory-motivated renormalization. This allows us to achieve performance improvements in dimensionality regimes where traditional methods struggle, without sacrificing interpretability or theoretical foundations.

% \textcolor{red}{cite the Volterra network papers, cite causal inference and kernel regression papers}

%%%%%%%%%%%%%%%%%%%%%%%%%
%%%%%%%%%%%%%%%%%%%%%%%%%
\section{Causal Framework}\label{sec:causal framework}

We consider a causal model with the following structure:

\begin{figure}[h]
\centering
\begin{tikzpicture}[scale=0.7]
    \node[circle, draw] (Z) at (0,0) {$Z$};
    \node[circle, draw] (X) at (2,0) {$X$};
    \node[circle, draw] (Y) at (4,0) {$Y$};
    \node[circle, draw] (A) at (3,1.5) {$U$};
    
    \draw[->, thick] (Z) -- (X);
    \draw[->, thick] (X) -- (Y);
    \draw[->, thick, bend left] (A) to (Y);
    \draw[->, thick, bend right] (A) to (X);
\end{tikzpicture}
\caption{Causal diagram: $Z$ is an instrument for $X$, while $U$ represents unobserved confounding between $X$ and $Y$.}
\end{figure}

In this framework, $Z$ is an instrumental variable that affects $X$ but not $Y$ directly. $X$ is the treatment/exposure variable that causally affects $Y$, which is the outcome of interest. Finally, $U$ represents unobserved confounding that affects both $X$ and $Y$.

The key assumptions are: $Z$ affects $Y$ only through $X$ (exclusion restriction); $Z$ has a non-zero effect on $X$; $Z$ is independent of the unobserved confounders $U$: $U \indep Z$ and $\E[U|Z] = 0$ and the confounding effect of $U$ on $Y$ is additive\footnote{If this is not the case, one could use proxy learning \cite{miao2018identifyingcausaleffectsproxy} which also leads to an integral equation with similar issues, the algorithm described in this paper would also improve the performance.}. The structural equation for the outcome $Y$ can be written as:
\begin{equation}
Y = g(X) + U\,,
\end{equation}
where $g(X)$ is the causal effect of interest, $U$ is the unobserved confounder. Our goal is to estimate the function $g(X)$ that represents the causal effect of $X$ on $Y$.

%%%%%%%%%%%%%%%%%%%%%%
%%%%%%%%%%%%%%%%%%%%%%
\section{Nonparametric Instrumental Variable Estimation (NPIV)}\label{sec:NPIV_basic}
Due to the confounding by $U$, standard regression of $Y$ on $X$ will not identify the causal effect $g(X)$. Instead, we use the instrument $Z$ to form moment conditions.
The key moment equation is:
\begin{equation}
\E[Y|Z] = \E[g(X)|Z]\,,
\end{equation}
after using the fact that $U \indep Z$ and $\E[U|Z] = 0$.

We represent the unknown function $g(X)$ using the reproducing kernel Hilbert space (RKHS) approach, with finite $n$ samples, the representer theory for kernel ridge regression dictates that:
\begin{equation}
g(x) = \sum_{i=1}^n \alpha_i K(x, x_i)\,,
\end{equation}
where $K(x, x')$ is a positive definite kernel, $x_i$ are stage 2 samples conditioned on $Z$ in the standard two-stage approach and $\alpha_i$ are coefficients to be determined from the standard representer theorem \cite{representer_theorem7}. 
%We show the standard representer theorem in appendix \ref{appendix:representer_theorem_generalized}.

To estimate $g(X)$, we formulate the following objective function:
\begin{equation}
S_0 = \E\left[\left(\E[Y|Z] - \int g(x)f(x|Z)dx\right)^2\right]\,,
\end{equation}
where $f(x|Z)$ is the conditional density of $X$ given $Z$, the outer expectation is taken with respect to $Z$.

Substituting the kernel representation, we get:
\begin{equation}
S_0 = \E\left[\left(\E[Y|Z] - \int \sum_{i=1}^n \alpha_i K(x, x_i)f(x|Z)dx\right)^2\right]\,.
\end{equation}
To ensure stability of the solution, we add a quadratic regularization term:
\begin{equation}
S_{\text{ridge}} = S_0 + \lambda\|g(x)\|^2_{\mathcal{H}}\,,
\end{equation}
where $\|g(x)\|^2_{\mathcal{H}}$ denotes the RKHS norm of $g(x)$. To derive the estimator, we first define:
\begin{align}
&h_i:= \E\left[\int K(x, x_i)f(x|Z)dx  \cdot \E[Y|Z]\right]\,;\\
&\widetilde{K}_{ij} := \E\left[\iint K(x, x_i)K(x', x_j)f(x|Z)f(x'|Z)dxdx'\right]\,; \nonumber
\end{align}
where the outer expectation is taken with respect to $Z$. Now we can rewrite the objective function as (omitting the $\E[Y|Z]^2$ term which is irrelevant from the RKHS coefficient $\alpha_i$):
\begin{equation}
S_{\text{ridge}} = \sum_{i,j} \widetilde{K}_{ij}\alpha_i\alpha_j - 2\sum_i \alpha_i h_i + \lambda\sum_{i,j} \alpha_i\alpha_j K_{ij}\,,
\end{equation}
where $K_{ij}$ is the standard Gram matrix elements, quite distinct from $\widetilde{K}_{ij}$ which is the density smoothed version. Taking the functional derivative with respect to $\alpha_i$ and setting it to zero\footnote{For a detailed discussion of functional derivatives and how they act, we refer the readers to the appendix \ref{appendix:representer_theorem_generalized}.}:
\begin{equation}\label{eq:order_0}
\frac{\delta S_{\text{ridge}}}{\delta \alpha_i} =0 \Rightarrow \sum_j \widetilde{K}_{ij}\alpha_j + \lambda K_{ij}\alpha_j = h_i\,.
\end{equation}
In matrix form $(\widetilde{K} + \lambda K)\alpha = h$. The solution is the standard kernel ridge, denoted as $\alpha^{(0)}$, the zeroth-order solution in our perturbative approach.

%%%%%%%%%%%%%%%%%%%%%%%%
%%%%%%%%%%%%%%%%%%%%%%%%
\section{Perturbative Approach}\label{sec:perturbative_approach}
When condition number $\kappa$ is high or the effective rank is low, a well-understood problem with kernel methods is that eigendirections with small eigenvalues become under-represented in the solution space, which suggests a fast decay in the eigenvalues. This would not be an issue if the target function $h$ happen to have features only in the eigendirections with large eigenvalues. However, in the NPIV setting, in the presence of a convolution integral equation, for example in deep learning IV \cite{wiltzer2024foundationsmultivariatedistributionalreinforcement, xu2023learningdeepfeaturesinstrumental}, the fast spectrum decay admits less and less effective eigenmodes. High condition number can also occur when instrumental variables are weak where changes in $Z$ causes few to no changes in $g(x)$ \cite{weakinstrumental, Stockweakinstrumental}.

In order to tackle this, we need to induce mixing between different eigendirections by remixing their weights in the regression solution\footnote{We do the spectral analysis in appendix \ref{appendix:spectral_explanation_of_triple_moment_int}.} To this end, we introduce a perturbative solution to kernel regression by incorporating additional higher-order interactions\footnote{This is rather standard approach from physics, for a reference chapter \cite{deFariadeMelo2010}, and \cite{Peskin:1995ev}, where when perturbatively studying the scattering problem in quantum electrodynamics, a electron-positron-photon-photon quartic interaction is added. Also used in standard PDE solving and quantum mechanics \cite{Dyson:1952tj, Volin:2009wr, Magnen:1977ha, Lipatov:1976ny}, usually quoted as the WKB approximation method.} The simplest term one can add in the objective function is the triple moment, we define:
\begin{equation}
K_{ijk} = \E\left[\int K(x, x_i)K(x, x_j)K(x, x_k)f(x|Z)dx\right]
\end{equation}
where $x$ labels an independent draw from the probability density $f(x|Z)$. Intuitively, this includes the potential influence of kernel functions from three different locations in $\mathbb{R}^n$ on the regressed function when mapped to the Hilbert space. We start with the objective function with a cubic term,
% \begin{strip}
%     \begin{equation}
%         S = \frac{1}{2}\left(\E[Y|Z] - \int g(x)f(x|Z)dx\right)^2 + \frac{\lambda}{2}\sum_{i=1}^n \alpha_i^2 K_{ii} 
% + \frac{\gamma}{6}\sum_{i,j,k} \alpha_i\alpha_j\alpha_k\int K(x,x_i)K(x,x_j)K(x,x_k)f(x|Z)dx
%     \end{equation}
% \end{strip}
\begin{align}\label{eq:int_objective}
S_{\text{pert}} = S_{\text{ridge}} + 
\frac{2\gamma}{3}&\sum_{i,j,k} \alpha_i\alpha_j\alpha_k K_{ijk}\,.
\end{align}
Taking the derivative with respect to $\alpha_i$, we obtain the minimization criteria for the objective function:
\begin{align}\label{eq:1}
&\frac{\delta S_{\text{pert}}}{\delta \alpha_i} =2\,\E\left[\lambda K_{ij}\alpha_j -\left(\E[Y|Z] - \int g(x)f(x|Z)dx\right)\times\right. \nonumber\\&\qquad\quad\int K(x',x_i)f(x'|Z)dx' +  
\\ &\left.\gamma\sum_{j,k} \alpha_j\alpha_k\int K(x,x_i)K(x,x_j)K(x,x_k)f(x|Z)dx\right] \nonumber\,.
\end{align}
We still expand the target function $g(x)$ in the kernel Hilbert space, which traditionally in the ridge regression case relies on the optimal representer theorem. We prove the generalized representer theorem in appendix \ref{appendix:representer_theorem_generalized} in the presence of the triple moment term. Using the representer theorem like before, we simply set the variation \eqref{eq:1} to zero and collecting terms, we arrive at the optimality condition on the kernel coefficients $\alpha_i$ that yields our perturbative expansion.
\begin{equation}
    \sum_{j} (\widetilde{K}_{ij}+\lambda K_{ij})\alpha_j + 3\gamma\sum_{j,k}K_{ijk}\alpha_j\alpha_k = h_i\,,
\end{equation}
where we have compactified the integrals, $\gamma$ is a coupling parameter that controls the strength of the three-point interaction (three point moment), we are essentially perturbing the regression problem around the zeroth order "free" theory by adding this term. 

%%%%%%%%%%%%%%%%%%%%%%%%%%%
\subsection{Perturbative Expansion}
Due to the addition of triple moment term with scalar $\gamma$, we expect the regressed result $\alpha_i$ to depend on $\gamma$, which should reduce to the usual kernel ridge regression solution \eqref{eq:order_0}. A naive first ansatz one could use is simply the power series. This is only a first guess so far, we are certainly not guaranteed to have an everywhere convergent series. In fact the experience from physics is that they are almost never convergent, accordingly there are procedures to tame the divergent ones we shall introduce in the next two subsections. Expand the coefficients as a power series in $\gamma$
\begin{equation}
\alpha = \alpha^{(0)} + \gamma\alpha^{(1)} + \gamma^2\alpha^{(2)} + \ldots\,.
\end{equation}
Substituting this into the optimality condition and collecting terms of the same power in $\gamma$, we get:
\begin{multline}
    \sum_j (\widetilde{K}_{ij}+\lambda K_{ij})\left(\sum_{l=0}^\infty\gamma^l\alpha^{(l)}_j\right) =\\
    h_i - \gamma\sum_{j,k}K_{ijk}\left(\sum_{n=0}^\infty \gamma^n\alpha^{(n)}_j\right)\left(\sum_{m=0}^\infty\gamma^m\alpha^{(m)}_k\right)\,.
\end{multline}
The benefit of expanding $\alpha_i(\gamma)$ as a power series means that we can now use the simple trick of coefficient matching:
\begin{equation}
    \left.\frac{\partial^n}{\partial\gamma^n}\left(\frac{\delta S_{\text{pert}}}{\delta \alpha_i}(\gamma)\right)\right\vert_{\gamma=0} = 0 \textbf{ for  $n\in\mathbb{N}$}\,.
\end{equation}
It is easy to see that we can solve for the $\alpha^{(n)}$s order by order iteratively. Looking at the coefficient of $\gamma^0$ term from both sides of the equation:
\begin{equation}
(\widetilde{K} + \lambda K)\alpha^{(0)} = h\,.
\end{equation}
This is the usual ridge regression term we talked about before. Similarly for the coefficient of $\gamma^1$, we have:
\begin{equation}
(\widetilde{K} + \lambda K)\alpha^{(1)} = -3\sum_{j,k} K_{ijk}\alpha^{(0)}_j\alpha^{(0)}_k\,,
\end{equation}
although notationally we omitted the $x_i$ dependence, hopefully it is self-explanatory\footnote{In full component form $(\widetilde{K}_{im} + \lambda K_{im})\alpha_m^{(1)} = -3\sum_{j,k} K_{ijk}\alpha^{(0)}_j\alpha^{(0)}_k$.} 
% and subsequently for $\gamma^2$:
% \begin{equation}
% (\widetilde{K} + \lambda K)\alpha^{(2)} = -3\sum_{j,k} K_{ijk}(\alpha^{(1)}_j\alpha^{(0)}_k + \alpha^{(0)}_j\alpha^{(1)}_k)\,.
% \end{equation}
and subsequently the coefficient of generic $\gamma^n$ term:
\begin{equation}
    \begin{aligned}
    &(\widetilde{K} + \lambda K)\alpha^{(n)}=-3\times\\
    &\sum_{j,k} K_{ijk}\left(\alpha_j^{0}\alpha_k^{n-1}+ \alpha_j^{1}\alpha_k^{n-2}+\dots+ \alpha_j^{n-1}\alpha_k^{0}\right)\,.
\end{aligned}
\end{equation}
We see that this is an iterative algorithm, where the computation of each new perturbative order is a solving simple ridge regression problem on all the previous perturbative order coefficients\footnote{This is equivalent to a technique familiar in perturbative quantum field theory called the Feynman diagram, where the sum is computed by drawing all possible diagrams whose external legs are the $\alpha_i^{(n)}$s.}. 

This creates non-trivial nonlinear mixing among different frequency modes and rescue the representation power of the kernel in high dimensions when condition number is high by boosting the presence/occurrence of eigenmodes with small eigenvalues. Explicitly, for cleaner presentation, the first two orders of contributions to the function $g(x)$ can be written as a sum over different eigen modes of the kernel matrix $\phi_m$ and their corresponding eigenvalue $\lambda_m$ (in the case when $\widetilde{K}$ is diagonalizable in the kernel eigenbasis)\footnote{The case where $\widetilde{K}$ is not diagonalizable in the kernel eigenbasis is slightly more involving, we present the easier version here for presentation purpose.}:
\begin{align}
    &g^{(0)}(x) =\sum_i\sum_{m=1}^N\frac{\lambda_m}{\eta_m+\lambda}\phi^T_m h \,\phi_m
    \\
    &g^{(1)}(x) =\sum_i\sum_{m,n,p,q,r}\frac{\lambda^2_m\lambda_n\lambda_p}{\eta_m+\lambda}\frac{\phi^T_q h}{\eta_q+\lambda}\frac{\phi^T_r h}{\eta_r+\lambda} \times \tilde\Phi_{mnpqr} \nonumber\,,
\end{align}
where $\tilde\Phi_{mnpqr}$ are various combinations of the eigenmode functions. $\eta_m$ are square root of the eigenvalues of the kernel smoothed $\widetilde{K}$ in the kernel eigenbasis, which is slightly smaller than $\lambda_m$ due to the variance contraction of the conditional integral operator. To simplify discussions, we have written the expression when $\eta_m\sim\lambda_m$, which essentially trivializes the conditional integral and brings us back to standard kernel ridge regression. We expand further into the derivation of this in appendix \ref{appendix:spectral_explanation_of_triple_moment_int}. We note that generically, $\widetilde{K}$ would not be diagonalizable in the eigenbasis of the kernel matrix, which in some sense, induces spectral mixing between different eigenmodes of the kernel. The spectral result we provide here is illustrating that on top of the linear mixing induced by the conditional expectation, the triple moment interaction term introduces additional non-linear spectral mixing. 

In this simple setting, the order $0$ contribution is the familiar spectral representation of standard kernel ridge regression. We see that due to the presence of the regularization scale $\frac{\lambda_m}{\lambda_m+\lambda}$, the contributions to $g(x)$ only come from larger $\lambda_m$s while eigenmodes with $\lambda_m\ll \lambda$ are suppressed. This is quantified by the condition number $\kappa = \lambda_{\text{max}}/\lambda_{\text{min}}$. High $\kappa$ signals the representation power of the kernel becomes concentrated to the first few large eigenvalues, effectively making other tail eigenmodes irrelevant (due to the ridge regularization serving as a cut-off threshold). On the other hand, the first order contribution $g^{(1)}(x)$ induces nonlinear spectral mixing, which even when $\kappa\gg 1$, reinstates the contribution from tail eigenmodes by pairing them with the larger eigenmodes\footnote{We note that this is certainly not the only possible term on can add in the objective function, one can reverse engineer any term that does not violate the representer theorem starting with the desired spectral properties.}. This remains true in the generic NPIV setting where $\widetilde{K}$ is non-diagonalizable in the eigenbasis of the kernel. 

%%%%%%%%%%%%%%%%%%%%%%%%%
\subsection{Renormalization}
As we mentioned earlier, the naive power series ansatz we assumed could be divergent, giving meaningless answers in the perturbative result $\alpha_i$. This can become dramatic especially when the inverse problem is more and more ill-defined, the norms of the coefficient vectors $\alpha^{(k)}$ can grow rapidly, leading to numerical instability. Physics literature has explored multiple approaches to address this divergence. One particular way, referred to as Wilsonian renormalization \cite{Wilson:1973jj}, imposes a cut-off/regularization parameter on physical systems (the energy spectrum for example) and rely on the belief that physical systems should not depend on such artificially imposed cut-off. This reveals the physically meaningful part of the divergent series which is stable under this process. 

Inspired by this, although the regularization parameter in the RKHS ($\lambda$ regularization strength in kernel ridge regression) is not physical\footnote{For readers familiar with the physics concept of renormalization might wonder if this is connected with the renormalization group equation, we discuss this further in appendix \ref{appendix:irrelevance_of_physical_renormalization}, asserting the differences with the usual renormalization and explaining why the physical assumption does not apply here.}, we introduce renormalization parameters as a simple rescaling $\alpha = \alpha^{(0)} + \tilde\gamma_1\alpha^{(1)} + \tilde\gamma_2\alpha^{(2)} + \ldots$ where $\tilde\gamma_k$ are the renormalized coefficients. Given by a simple rescaling:
\begin{equation}
    \tilde\gamma_k = \gamma^k \min\left(\frac{\|\alpha^{(0)}\|}{\|\alpha^{(k)}\|}, 1\right) \,,
\end{equation}
as we shall further demonstrate in appendix \ref{subsec:need_for_renorm} using the spectral property of the kernel operators, the inevitable divergence of the naive power series, this simple rescaling tames the divergence. 

%%%%%%%%%%%%%%%%%%%%%
\subsection{Resurgence}\label{subsec:resurgence}
Yet another tool to tame potential divergence in the perturbative ansatz and extract useful information is resurgence \cite{ecalle1981fonctions,Berry1991}. In certain cases, the series could be an asymptotic series, which is factorially divergent in its coefficients $\alpha_i^{(n)}$. 

This happens simply because we have chosen the point $\gamma=0$ to perturbatively expand (which is assumed to be a saddle in the objective function landscape $S$), this may very well be incorrect due to the presence of numerous other saddle points. Luckily, resurgence comes to rescue, we give a much more detailed introduction on this subject practically in appendix \ref{appendix:resurgence}. The belief of the resurgence trick is that divergent series are divergent for a reason, they diverge due to the existence of other nearby saddle points in the function landscape. Naive perturbative expansion "falls off the cliff" along the directions where the current saddle is a maxima. Resurgence essentially extracts the nearby saddle point information from the divergent higher order terms which have factorially divergent coefficients using complex analysis, then rewriting the series to account for the existence of the nearby saddle points, we recover a trans-series, which has finite radius of convergence. Resurgence/Borel resummation finds ways to take factorially divergent series and resum them to an integral form. Although this integral is not well-defined around singularities in its domain, one can resolve this by adding additional series around those points suppressed by the ambiguity value. This returns a convergent series with no ambiguity. Curious readers are referred to appendix \ref{appendix:resurgence} for more detail.

For now, on a practical level, we shall summerize things as an algorithm if we run into factorially divergent coefficient functions. Check if $\alpha_{i}^{(n)}\sim n!$, if false, then series not asymptotic, just stick to renormalized series. If true, then series asymptotic, use Borel resummation $\hat{g}(tX) = \sum_i\sum_{n=0}^\infty \frac{\alpha_{i}^{(n)}}{n!}\gamma^n K(tX, X_i)$. Then one examines singularity on complex $t$-plane. This gives a collection $\{t_k\}_{k=1}^m$ singularities. Computing the residue around those singularities returns a trans-series with no ambiguity:
\begin{equation}
    \Phi(X) = \underbrace{\sum_{n=0}^\infty\alpha^{(n)}\gamma_n}_{\text{original series}}  + \underbrace{\sum_{k=1}^m C_k\,e^{-k \,t_k(X)/X}g_k(X)}_{\text{non-perturbative near saddle}}\,,
\end{equation}
with $C_k = S_{C_+} - S_{C_-}= \text{Res}_{t=t_k}\hat{g}(tX)$ is the ambiguity of choosing complexified integration contour above or below the singularity named Stokes constant. 

% This procedure starts with a perturbative series which is Borel resummable across the entirety of the complex plane and give us the original integral back, but there is an ambiguity near the singularity, when the integration contour goes across the singularity, we see a jump in the result. This is resolved by adding a term that jumps in the opposite way (the non-perturbative term). Then we have a non-ambiguous definition of a series\footnote{This is done in the formal Borel-Escalle theory by computing the Lefschetz thimbles, we shall not delve into that.}. 

The one line take away message of this part is that when the perturbative series $\alpha_i(\gamma)$ is factorially divergent, one could use the method described above to obtain a series with finite (nonzero) radius of convergence and extract useful information from the originally wildly divergent series.

%%%%%%%%%%%%%%%%%%%%%%%%
%%%%%%%%%%%%%%%%%%%%%%%%
\section{Implementation}\label{sec:implementation}
To summerize the perturbative procedures we discussed so far, we practically have two stages. First compute the kernel matrices needed, then iteratively compute the raw RKHS coefficients $\alpha_i(\gamma)$ as a power series in $\gamma$ up to order $N$. Then we handle the case if the coefficients $\alpha_i^{(n)}$ are divergent, we first use rescaling. If they are factorially divergent, we further use resurgence.  
%%%%%%%%%%%%%%%%%%%%%%%%%%%%%%%%%%%%%%%%%
\subsection{Computing the Kernel Matrices}
For implementation, one can use the standard Nadaraya-Watson estimator to estimate the two conditional densities $\E[Y|Z]$ and $f(X|Z)$ after picking the kernel. We do this for the triple moment kernel:
\begin{equation}
K_{ijk}\approx \frac{1}{n}\sum_{r=1}^n K(x_r, x_i)K(x_r, x_j)K(x_r, x_k)\cdot w_r\,,
\end{equation}
where $w_r$ is the weight that account for the conditional density $f(x|Z)$, which is estimated using standard Nadaraya-Watson estimator. And similarly for 
\begin{equation}
    \E[Y|Z=z] \approx \frac{\sum_{i=1}^n l(z, z_i)y_i}{\sum_{i=1}^n l(z, z_i)}\,,
\end{equation}
where $l(z, z')$ is a kernel function for the instrument space.
More traditionally in the usual kernel ridge regression, instead of directly estimating the density $b_i(Z) = \E[K(x,x_i)|Z]$, people use the two stage method where the first stage is to compute the conditional mean embedding directly in RKHS via the covariance operator. For each $z\in Z$, the conditional feature vector is $b(z) = K_X(L+n\rho I)^{-1}l_z$, where $(K_X)_{ij} = k(x_i, x_j)$ and $L_{kl} = l(z_k,z_l)$ are Gram matrices on X and Z, $l_z=[l(z_1,z), \dots,l(z_n, z)]^T$ denotes the row vector, $n$ is the dimension of the instrumental while $\rho$ is some regularization parameter. Then computing $\E[Y|Z] = \frac{1}{n}\,l_z^T(L+n\rho I)^{-1} y$. The desired quantities are computed using $b_i(Z)$ and $\E[Y|Z]$:
\begin{equation*}
    h_i = \E[b_i \E[Y|Z]]\,, K_{ij} = \E[b_i b_j] = \frac{1}{n}\sum_{l=1}^n b_i(z_l) b_j(z_l)\,.
\end{equation*}

%%%%%%%%%%%%%%%%%%%%%%%%
\subsection{Algorithm}
For prediction on new points $x_{new}$, we compute:
\begin{multline}
g(x_{new}) = \sum_i \alpha_i^{(0)} K(x_{new}, x_i) + \sum_i \gamma_1 \alpha_i^{(1)} K(x_{new}, x_i) \\
+ \sum_i \gamma_2 \alpha_i^{(2)} K(x_{new}, x_i) + \ldots\,,
\end{multline}
where $\alpha_i^{(n)}$ are the order by order RKHS coefficients induced by the triple moment and $\gamma_n$ are the renormalized coupling powers. The sum is truncated at a certain order $N$, where the marginal gain of computing a new order no longer beats the increase in computation time. 

The general procedure can be summerized in the following fashion. We first compute the needed kernel matrices $K_{ij}$, triple moment $K_{ijk}$ and conditional mean $\E[Y|Z]$ through conditional mean embedding and Nadaraya-Watson estimator. Then we compute the perturbative series in kernel coefficients $\alpha_i = \sum_{n=0}^N \gamma_n\,\alpha_i^{(n)}$ order by order solving ridge regression problems up to a cut-off order $N$. If we detect divergence in the series, rescale (renormalize) the coupling strength adaptively, if the series is factorially divergent, use resurgence algorithm. A detailed description for the first three steps is as in \ref{algo:renormalization}, which has an time complexity of $\mathcal{O}(n^3\times M)$ with $n$ data points computed to $M$ orders, prior to this, computing $K_{ijk
}$ requires a further $\mathcal{O}(n^4)$ complexity computation. We deferred the detail of last resurgence step to the appendix \ref{algo:resurgence}.

%%%%%%%%%%%%%%%%%%%%%%%%%%%%%%
%%%%%%%%%%%%%%%%%%%%%%%%%%%%%%
\section{Experimental Results}\label{sec:experiments}

We conducted extensive experiments to evaluate the performance of our perturbative renormalization approach across different dimensionality regimes. The experiments were done with $n=80$ data points generated around the ground truth, which are the datasets described in appendix \ref{sec:dataset}, which create a series of challenging high-dimensional NPIV problem with controlled properties including standard IV benchmarks. Here we briefly summerize the overall performance of the algorithms across different kernel types: Gaussian RBF kernel and fractional Brownian kernel. As a comparison, we also add the fitting RMSE of Deep IV method \cite{hartford2017deep} on the same dataset after $500$ epochs of training each regression stage with approximately 2k parameter neural nets. We leave the complete tables of data across dimensions and renormalization strength in appendix \ref{appendix:detailed_results}. 
% \begin{table*}[htbp]
% \centering
% \caption{Performance Summary of NPIV Methods by Kernel Type}
% \label{tab:npiv_kernel_summary}
% \begin{tabular}{l|ccc|cc}
% \toprule
% \multirow{2}{*}{\textbf{Kernel}} & \multicolumn{3}{c|}{\textbf{Mean RMSE}} & \multicolumn{2}{c|}{\textbf{Improvement}}  \\
% & \textbf{Order 0} & \textbf{Renorm}& \textbf{Trans} & \textbf{Mean $\Delta\%$} & \textbf{Best $\Delta\%$} \\
% \midrule
% RBF  & 0. & 0.621 & 0.620 & $-0.1$ & $0.0$ \\
% Fractional Brownian  & 0.711 & 0.585 & 0.484 & $+21.7$ & $+85.0$   \\
% \bottomrule
% \end{tabular}
% \footnotesize
% \begin{tablenotes}
% \item Order 0 = baseline kernel ridge regression; Renorm = renormalized series; Trans = trans-series with resurgence
% \item $\Delta\%$ = percentage improvement over Order 0 baseline
% \end{tablenotes}
% \end{table*}

\begin{table*}[htbp]
\centering
\caption{NPIV best Performance Scaling with Dimension}
\label{tab:beta_sweep_all_kernels}
\begin{tabular}{cc|ccc|ccc|c}
\toprule
\multirow{2}{*}{$\beta$} & \multirow{2}{*}{$d$} & \multicolumn{3}{c|}{\textbf{RBF}} & \multicolumn{3}{c|}{\textbf{fractional Brownian}} & \textbf{Deep IV} \\ & & Order 0 & Best Pert & Best $\Delta$\% & Order 0 & Best Pert & Best $\Delta$\%  & RMSE\\
\midrule
0.3 & 3 & 0.317 & 0.318 & $-0.2$ & 0.235 & 0.238 & $-1.2$ & \textbf{0.121}\\
0.5 & 7  & 0.536 & 0.536 & $+0.0$ & \textbf{0.173} &  0.182 & $-5.2$ & 0.187\\
0.7 & 17 & 0.289 & 0.289 & $+0.0$  & 0.382 & \textbf{0.124} & $+67.5$ & 0.225\\
1.0 & 60 & 0.360 & 0.360 & $+0.0$ & 1.654 & 0.108 & $+93.5$ & \textbf{0.089}\\
1.3 & 204 & 0.511 & 0.511 & $+0.0$ & 3.766 & \textbf{0.028} & $+99.3$ & 0.078\\
1.5 & 464 & 0.465 & 0.465 & $+0.0$ & 11.52 & 0.136 & $+98.8$ & \textbf{0.096}\\
\bottomrule
\end{tabular}
\footnotesize
\begin{tablenotes}
\centering
\item $\beta$ controls dimension $d = n^\beta$ where $n = 80$; values taken over best among $\gamma \in \{0.4, 0.6, 0.8, 0.99\}$
\item "Order 0" is kernel ridge regression RMSE\footnote{The results presented are all cross validated for different choices of the L2 regularization parameter $\lambda$.}; "Best Pert" refers to the best RMSE obtained using perturbative method; "Best $\Delta\%$" represents the percentage gain in RMSE.
\end{tablenotes}
\end{table*}

\begin{figure}[ht]
    \centering
    \includegraphics[width=0.75\linewidth]{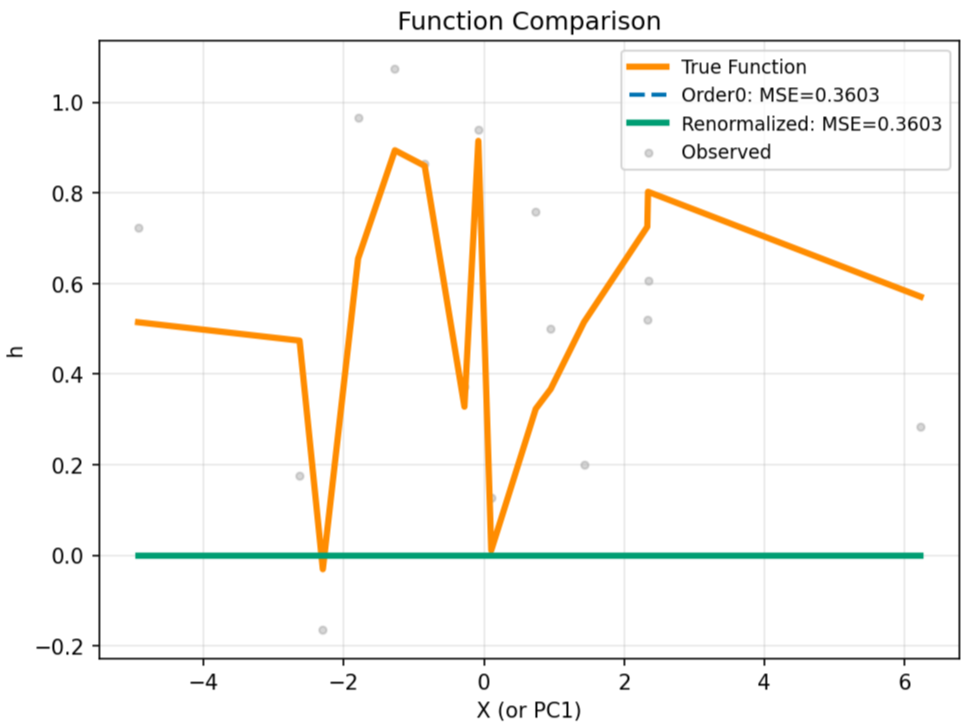}
    \caption{Example RBF kernel fitting}
    \label{fig:RBF_kernel_example}
\end{figure}

\begin{figure}[ht]
    \centering
    \includegraphics[width=0.75\linewidth]{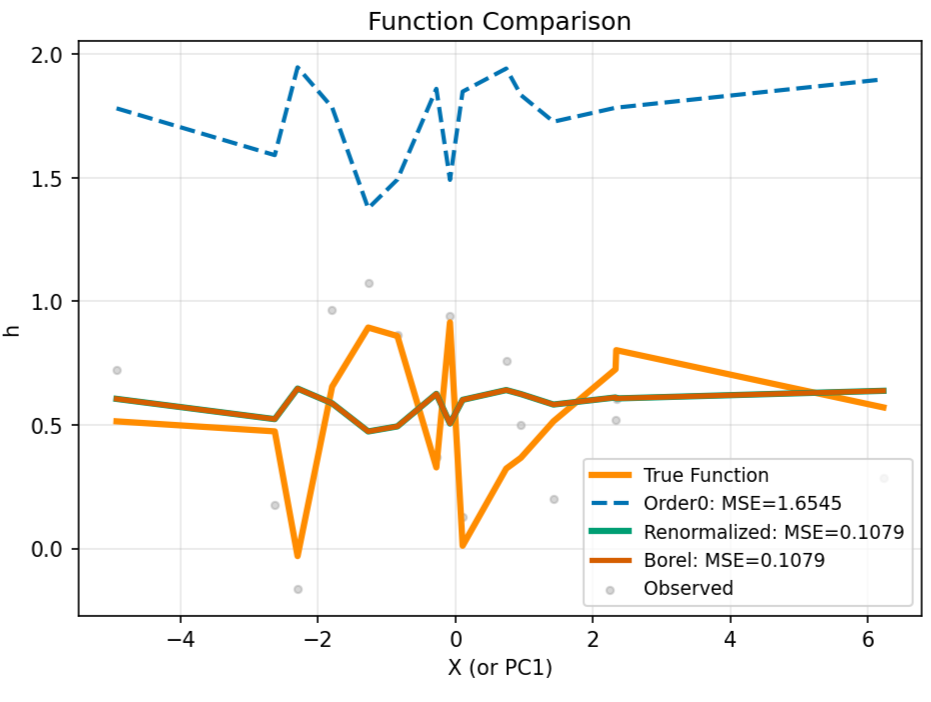}
     \caption{Example fractional Brownian kernel fitting}
    \label{fig:FB_kernel_example}
\end{figure}

In appendix \ref{subappendix:further_datasets}, we also include further standard NPIV datasets: Newey-Powell \cite{newey2003instrumental}, weak/strong instrumental datasets, heteroscedastic dataset, nonlinear instrumental dataset and sparse signal dataset. The performance result is summerized here, note that we only focus on the improvement against regularized kernel IV (order $0$) using fractional Brownian kernel, Deep IV method is there for completeness:

\begin{table*}[htbp]
\centering
\caption{RMSE Comparison on Alternative NPIV Datasets}
\label{tab:alternative_datasets_rmse}
\begin{tabular}{l|ccc}
\toprule
\textbf{Dataset} & \textbf{DeepIV} & \textbf{Order 0} & \textbf{Best Pert.}  \\
\midrule
Newey-Powell & \textbf{1.130} & 14.858 & 8.718  \\
Weak IV ($\rho=0.1$) & 0.347 & 2.779 & \textbf{0.280}  \\
Weak IV ($\rho=0.3$) & \textbf{0.289} & 2.165 & 0.339  \\
Strong IV ($\rho=0.7$) & \textbf{0.255} & 2.175 & 0.329  \\
Heteroscedastic & \textbf{0.678} & 7.969 & 0.724\\
Nonlinear IV & \textbf{0.388} & 14.986 & 0.409 \\
Sparse ($d=50$) & 0.586 & 13.833 & \textbf{0.560}  \\

\bottomrule
\end{tabular}
\footnotesize
\begin{tablenotes}
\item RMSE = $\sqrt{\text{MSE}}$; ``Order 0'' is standard kernel ridge IV with fractional Brownian kernel; ``Best Pert.'' is the best result across $\gamma \in \{0.4, 0.6, 0.8, 0.99\}$ with fractional Brownian kernel. Bold indicates best method per dataset.
\end{tablenotes}
\end{table*}

In addition to these, we further include empirical analysis on the performance of the algorithm on IV datasets with weak instrumentals in appendix \ref{sec:weak_instruments}, a sensitivity analysis on the parameters used in the algorithm (regularization parameter $\gamma$, ridge regularization parameter $\lambda$ and maximum order of perturbation $N_{\text{max}}$) in appendix \ref{sec:sensitivity} and experiments with larger sample size in appendix \ref{subappendix:largersampleexperiments}.

% Summary text for paper
\textbf{Key Findings:} The RBF kernel shows minimal improvement across different kernel widths. If one plots the fitted curve on cross sections of the hypersurface as in figure \ref{fig:RBF_kernel_example}, where the cross section is taken with respect to dimension $1$, we see that RBF kernel almost zeros out due to its rotational invariance forces concentration in high dimensions, this almost trivial fitting gives a convenient baseline MSE for each $\beta$ case. We see in the table \ref{tab:beta_sweep_all_kernels} that perturbative method we have here also matches that baseline, it simply overlaps with the trivial order $0$ kernel ridge regression fit in \ref{fig:RBF_kernel_example} since the added interaction is also constrained by rotational invariance. In appendix \ref{appendix:spectral_explanation_of_triple_moment_int} we further discuss why the extra term we add works from a spectral perspective, while explaining why it respects rotational invariance hence contributing minimally to the fitted result. We further propose another different kernel that breaks this invariance hence helping the performance of RBF kernels in lower dimensions in appendix \ref{appendix:new_int_term_breaking_rotationally_invariance}. Although in high dimensions, the curse of dimensionality forces everything to fit trivially, making RBF kernels beyond rescue. 

The fractional Brownian kernel, on the other hand, shows superior performance over the trivial RBF kernel fit at order $0$ when dimension is low $\beta < 0.7$ as in the first two rows of the table \ref{tab:beta_sweep_all_kernels}. At $\beta \geq 0.7$, we see that it actually has worse RMSE than the trivial RBF fit, as plotted in \ref{fig:FB_kernel_example}, this is due to the non-trivial wiggle attempting to non-trivially fit the curve. The perturbative method effectively brings the fitted curve down to the true function, significantly improving the fitted curve. The perturbatively revised fitting consistently outperforms the trivial RBF kernel baseline as well. It is not obvious from the diagram whether resurgence was also used (referred to as "Borel" in the legend), since it completely overlaps with the pure perturbative and renormalization ("Renormalized") line, the mechanism only comes into effect when the series is factorially divergent, which is not the case in majority of the cases.  

We note that the two algorithms we wrote \ref{algo:renormalization}, \ref{algo:resurgence} need to be comprehensively used to maximize their effectiveness. In that \ref{algo:resurgence} is only useful when the perturbative series in kernel coefficients $\alpha$ is asymptotic or factorially divergent. The resummation hurts the performance when this is not the case. It is not hard to see that this depends on the perturbative strength parameter $\gamma$, it is easier to obtain an asymptotic series when $\gamma$ is closer to $1$. $\gamma\to 1$ suggests more significant contributions from higher order terms in \ref{algo:renormalization}, which might hurt the its performance. Hence there is a subtle empirical balance in the choice of $\gamma$ to balance them. 

Overall, solving the integral equation amplifies the ill-conditioned-ness of the kernel matrices in high dimensions (high $\beta$/dimension-sample ratio). Perturbative approach \ref{algo:renormalization} rescues this by signifying the contribution from eigendirections with small eigenvalues, hence enhancing the expressivity of the kernel. Resurgence algorithm \ref{algo:resurgence} further corrects the behavior of the perturbative series when it is factorially divergent. This is a novel method of extracting the most out of kernel ridge regression. We note that the form of the additional term we added in the objective \eqref{eq:int_objective} can certainly be replaced with other ones, spectrally one could tailor the additional term needed in order for the induced perturbative solution to enhance a certain aspect of typical kernel ridge regression. We demonstrate this in appendix \ref{appendix:new_int_term_breaking_rotationally_invariance}, where a new term in the objective is introduced that explicitly breaks rotational invariance, tailored to tackle the extreme vanishing of rotationally invariant kernels in high dimensions. We hope this paper--although as a crude first try--demonstrates the applicability of perturbative methods (and the methods taming them) in statistical/econometrics world\footnote{Rigorous statistical proofs are not present, such as those finite sample bounds for standard kernel ridge regression \cite{hang2018optimallearninganisotropicgaussian,fischer2020sobolevnormlearningrates,Caponnetto2007OptimalRF}, this is due to the proposed algorithm does not satisfy the standard assumptions under which most of the estimator efficiency proofs are constructed, we leave this for future work.}.

\bibliography{example_paper}
\bibliographystyle{icml2026}

%%%%%%%%%%%%%%%%%%%%%%%%%%%%%%%%%%%%%%%%%%%%%%%%%%%%%%%%%%%%%%%%%%%%%%%%%%%%%%%
%%%%%%%%%%%%%%%%%%%%%%%%%%%%%%%%%%%%%%%%%%%%%%%%%%%%%%%%%%%%%%%%%%%%%%%%%%%%%%%
% APPENDIX
%%%%%%%%%%%%%%%%%%%%%%%%%%%%%%%%%%%%%%%%%%%%%%%%%%%%%%%%%%%%%%%%%%%%%%%%%%%%%%%
%%%%%%%%%%%%%%%%%%%%%%%%%%%%%%%%%%%%%%%%%%%%%%%%%%%%%%%%%%%%%%%%%%%%%%%%%%%%%%%
\newpage
\appendix
\onecolumn
\section{Spectral explanation of the triple moment term}\label{appendix:spectral_explanation_of_triple_moment_int}
\subsection{Mercer Decomposition and Covariance Operator}

Every positive definite kernel admits the representation:
\begin{equation}
k(x,y) = \langle \varphi(x), \varphi(y) \rangle = \sum_{m=0}^{\infty} \lambda_m g_m(x) g_m(y)\,,
\end{equation}
where $\varphi: \mathcal{X} \to \mathcal{H}$ is the feature map, $\{g_m:\mathcal{X}\to\mathbb{R}\}$ are orthonormal eigenfunctions of the integral operator $B: L^2(P_X) \to L^2(P_X)$:
\begin{equation}
B g_m(x) = \int_{\mathcal{X}} k(x,y) g_m(y) dP_X = \lambda_m g_m(x)
\end{equation}
where $P_X$ denotes the marginal density of $X$. We shall use $P_Z$ to denote the marginal density of the instrumental $Z$ to distinguish with the conditional integral operator $T: L^2(P_X)\to L^2(P_Z)$. $\{\lambda_m\}$ are eigenvalues of the covariance operator:
\begin{equation}
\Sigma = \int_{\mathcal{X}} \varphi(x) \otimes \varphi(x) d\mu(x)\,.
\end{equation}
With finite data $\{x_i\}_{i=1}^n$, we approximate:
\begin{align}
K_{ij} &= k(x_i, x_j) = \sum_{m=1}^n \lambda_m \phi_m[i] \phi_m[j]\\
\phi_m[i] &\approx g_m(x_i)\,,
\end{align}
where $\phi_m \in \mathbb{R}^n$ are eigenvectors of the kernel matrix $K$.

\subsection{Order-by-Order Construction}
Recall the perturbative series we constructed 
\begin{equation}
    g(x) = \sum_{i}\sum_{n=0}^\infty \gamma^n\,\alpha^{(n)}(x_i) K(x, x_i) \,,
\end{equation}
was solved iteratively:
\begin{align*}
\text{Order 0:} \quad &(\widetilde{K} + \lambda K)\alpha^{(0)} = h\\
\text{Order 1:} \quad &(\widetilde{K} + \lambda K)\alpha^{(1)} = -3 \sum_{j,k} K_3[i,j,k] \alpha^{(0)}_j \alpha^{(0)}_k\\
\text{Order n:} \quad &(\widetilde{K} + \lambda K)\alpha^{(n)} = \text{RHS}_n\left(\alpha^{(0)}, \ldots, \alpha^{(n-1)}\right)\,,
\end{align*}
where the $n$th order right hand side is a function of various combinations of the previous coefficients. 
\begin{equation}
    \text{RHS}_n = -3\sum_{j,k} K_3[i,j,k]\times
    \left(\alpha_j^{(0)}\alpha_k^{(n-1)}+ \alpha_j^{(1)}\alpha_k^{(n-2)}+\dots+ \alpha_j^{(n-1)}\alpha_k^{(0)}\right)
\end{equation}
and the three-point kernel tensor is:
\begin{equation}
K_3(x_i, x_j, x_k) = \int K(x,x_i) K(x, x_j) K(x, x_k) f(x|Z)dx\,,
\end{equation}
where we use $K[i,j]$ to shorthand entries of the Gram matrix $K(x_i, x_j)$.
% and $\lambda_n = \lambda \times (\text{scale factor})^n$ provides increasing regularization.

In order to see the result spectrally, we project both $\widetilde{K}$ and $K$ onto the eigenbasis of $K$:
\begin{equation}
    M := \Phi^T \widetilde{K} \Phi\,,\quad K = \Phi \Lambda \Phi^T
\end{equation}
where $\Lambda$ is the diagonalized Gram matrix $K$ with eigenvalues $\{\lambda_m\}_{m=1}^n$ and columns of $\Phi$ are the eigenvectors $\phi_m$. The perturbative equations become:
\begin{equation}
    (M+\lambda \Lambda) (\Phi^T\alpha^{(n)}) = \Phi^T\text{RHS}_n
\end{equation}
if $M$ is diagonal, we would jump to conclusion and simply write the spectral version of this equation:
\begin{equation}
    \Phi^T\alpha^{(n)} = \sum_m\frac{\phi_m^T\text{RHS}_n}{M_{mm}+\lambda\lambda_m}
\end{equation}
which in the standard IV kernel ridge regression setting $M_{mm}= \langle T(\sqrt{\lambda_m}\phi_m), T(\sqrt{\lambda_m}\phi_m)\rangle_{L^2(P_Z)}=\eta_m^2 < \lambda_m^2$ (due to the covariance reducing nature of $T$) and $\text{RHS}_0 = h$, which recovers the known expression for the $0$th order kernel ridge regression result:
\begin{equation}
    g^{(0)}(x) = \sum_{m=1}^N\frac{\lambda_m^2}{\eta_m^2+\lambda\lambda_m}\phi^T_m[i] h\,\phi_m[X]
\end{equation}
where spectrally only eigenvectors with larger eigenvalues contribute. 

When $M$ is not diagonal in the IV setting, we whiten the equation using $\Lambda^{1/2}$ as
\begin{equation}
    (\underbrace{\Lambda^{-1/2}M\Lambda^{-1/2}}_{A}+\lambda I) (\Lambda^{1/2}\alpha^{(n)}) = \underbrace{\Lambda^{-1/2}\Phi^T\text{RHS}_n}_{\widetilde{\text{RHS}_n}}
\end{equation}
Diagonalizing the positive semi-definite $A=U\Sigma U^T$ gives
\begin{equation}\label{generic_solution}
    \alpha^{(n)} = \Phi\Lambda^{-1/2} U(\Sigma+\lambda I)^{-1}U^T \,\widetilde{\text{RHS}_n}
\end{equation}
which is shrinking along the eigenvector of $A$ uniformly, not the original kernel eigenvector. Note that the conditional expectation is also, in this sense, inducing spectral mixing by rotating different eigenmodes of the kernel matrix under $U$ depending on how off-diagonal $M$ is. Although we shall see that in addition to this, the three-point tensor introduces even further non-linear spectral mixing. 

In addition, the three-point tensor decomposes as:
\begin{equation}
K_3[i,j,k] 
= \sum_{m,n,p=1}^N \lambda_m \lambda_n \lambda_p \phi_m[i]\phi_m[j] \phi_n[j]\phi_n[k] \phi_p[k]\phi_p[i]\,.
\end{equation}
Define the three-way coupling tensor:
\begin{equation}
T_{mnp}[i,j,k] = \phi_m[i]\phi_m[j] \phi_n[j]\phi_n[k] \phi_p[k]\phi_p[i]\,.
\end{equation}

This creates \textbf{cross-eigenspace interactions}:
\begin{equation}
K_3[i,j,k] = \sum_{m,n,p} \lambda_m \lambda_n \lambda_p T_{mnp}[i,j,k]\,.
\end{equation}
The coupling strength between eigenspaces $m$, $n$, $p$ is weighted by $\lambda_m \lambda_n \lambda_p$.
For the first-order correction:
\begin{align}
\text{RHS}_1[i] &= -\sum_{j,k} K_3[i,j,k] \alpha^{(0)}_j \alpha^{(0)}_k\\
&= -\sum_{m,n,p} \lambda_m \lambda_n \lambda_p \sum_{j,k} T_{mnp}[i,j,k] \alpha^{(0)}_j \alpha^{(0)}_k\,.\nonumber
\end{align}

Since $\alpha^{(0)}_j = \sum_q \alpha^{(0)}_q \phi_q[j]$, this becomes:
\begin{equation}
\text{RHS}_1[i] 
= -\sum_{m,n,p,q,r} \lambda_m \lambda_n \lambda_p \alpha^{(0)}_q \alpha^{(0)}_r \underbrace{\sum_{j,k} T_{mnp}[i,j,k] \phi_q[j] \phi_r[k]}_{I_{mnpqr}}\,.
\end{equation}
The key coupling integral is:
\begin{align*}
I_{mnpqr} &= \sum_{j,k} T_{mnp}[i,j,k] \phi_q[j] \phi_r[k]\\
&= \sum_{j,k} \phi_m[i]\phi_m[j] \phi_n[j]\phi_n[k] \phi_p[k]\phi_p[i] \phi_q[j] \phi_r[k]\\
&= \phi_m[i]\phi_p[i] \left(\sum_j \phi_m[j]\phi_n[j]\phi_q[j]\right)\times\\
&\left(\sum_k \phi_n[k]\phi_p[k]\phi_r[k]\right)\,.
\end{align*}
We shall denote the triple products $\sum_j \phi_a[j]\phi_b[j]\phi_c[j]$ evaluated at the same $X_j$ as $S_{abc}$. $S_{abc}$ depends on the specific kernel one uses, it symbolizes how much higher order interactions a kernel function can have. Hence we have
\begin{equation}
    \text{RHS}_1[i] = -\sum_{m,n,p,q,r}\lambda_m\lambda_n\lambda_p\alpha^{(0)}_q\alpha^{(0)}_r \phi_m[i]\phi_p[i] S_{mnq}S_{npr}\,.
\end{equation}
Using the generic form of the spectral solution \eqref{generic_solution}, we can substitute in the $\text{RHS}_n$ to obtain the corresponding $\alpha^{(n)}$. 
\begin{equation}
    \begin{aligned}
        &g^{(0)}(x) = \sum_{i}K(x,x_i)\Phi\Lambda^{-1/2} U(\Sigma+\lambda I)^{-1}U^T\Lambda^{-1/2}\Phi^T h \\
        &g^{(1)}(x) = \sum_i K(x,x_i)\Phi\Lambda^{-1/2} U(\Sigma+\lambda I)^{-1}U^T\Lambda^{-1/2}\Phi^T\\
        & \sum_{m,n,p,q,r}\lambda_m\lambda_n\lambda_p\alpha^{(0)}_q\alpha^{(0)}_r \phi_m[i]\phi_p[i] S_{mnq}S_{npr}\,.
    \end{aligned}
\end{equation}
Although we can already see that the first order contribution to $g(x)$ contains nonlinear mixing between the eigendirections through $\lambda_m\lambda_n\lambda_p$, it is clearest in the simplified setting of kernel ridge regression. 

In order to see what is happening spectrally clearly, we work under the conditions that $M$ is diagonal or almost diagonal in the Mercer basis, with $M_{mm}=\langle T(\sqrt{\lambda_m}\phi_m), T(\sqrt{\lambda_m}\phi_m)\rangle_{L^2(P_Z)} = \eta_m^2 < \lambda_m^2$, since the conditional operator $T$ is contraction operator reducing the variance. 
Summarizing the zeroth order and first order term:
\begin{equation}
    \begin{aligned}
        &g^{(0)}(x) =\sum_i\sum_{m=1}^N\frac{\lambda_m^2}{\eta_m^2+\lambda\lambda_m}\phi^T_m[i] h\,\phi_m[X]\,; \\
        &g^{(1)}(x)= \sum_i\sum_{m,n,p,q,r}\\&\frac{\lambda_m^3\lambda_n\lambda_p}{\eta_m^2+\lambda\lambda_m}\alpha^{(0)}_q\alpha^{(0)}_r \phi_m[i]\phi_p[i] \phi_m[i]\phi_m[X]S_{mnq}S_{npr}\\
         = & \sum_i\sum_{m,n,p,q,r}\frac{\lambda^3_m\lambda_n\lambda_p}{\eta_m^2+\lambda\lambda_m}\frac{\lambda_q\phi^T_q h}{\eta_q^2+\lambda\lambda_q}\frac{\lambda_r\phi^T_r h}{\eta_r^2+\lambda\lambda_r}\times\\
         &\phi_m[i]\phi_p[i]\phi_m[i]\phi_m[X] S_{mnq}S_{npr}\,.
    \end{aligned}
\end{equation}

Here we introduce the quantity condition number $\kappa$ as the ratio between the largest and smallest eigenvalues of the kernel matrix. Since eigenvalues are labeled from $1$ to $N$ in a descending order for positive definite matrices, we have the following definition:
\begin{equation}
    \kappa = \lambda_1/\lambda_N\,.
\end{equation}

\paragraph{Scale separation!}

Comparing the zeroth and first order solutions, we see that condition number $\kappa$ plays an important role in determining when the first order contribution matters. For simplicity in the arguments, we assume $\eta_m\sim \lambda_m$. Then
\begin{equation}\label{eq:diagnaol_solutions_two_orders}
    \begin{aligned}
        &g^{(0)}(x)=\sum_i\sum_{m=1}^N\frac{\lambda_m}{\lambda_m+\lambda}\phi^T_m[i] h\,\phi_m[X]\,; \\
        &g^{(1)}(x) = \sum_i\sum_{m,n,p,q,r}\frac{\lambda^2_m\lambda_n\lambda_p}{\lambda_m+\lambda}\frac{\phi^T_q h}{\lambda_q+\lambda}\frac{\phi^T_r h}{\lambda_r+\lambda}\times\\
         &\phi_m[i]\phi_p[i]\phi_m[i]\phi_m[X] S_{mnq}S_{npr}\,.
    \end{aligned}
\end{equation}
When $\kappa$ is well-conditioned, $\lambda_m\sim\lambda_p\sim \lambda_q$ are of similar order of magnitude for all $p,q,m$. This simply puts $\alpha^{(1)}$ in the same footing as $\alpha^{(0)}$, roughly counting the number of eigenvalues bigger than the regularization parameter $\lambda$. All eigenvectors contribute equally in this regime. When $\kappa \gg 1$, $\alpha^{(0)}$ only receives contributions from the first few eigenvectors, losing its representation power immensely. However, $\alpha^{(1)}$ is still corrected by further eigenvalues since the coefficient $\lambda_m\lambda_p\lambda_q$ can still remain large when we select $\lambda_m$ for large $m$ as long as we choose $p,q$ to be close to $1$. This introduces additional representation power by mixing in the eigenvectors with small eigenvalues. 
If $\kappa$ is well-conditioned, all eigenfunctions are represented similarly while $\kappa\gg 1$ indicates only the first few eigenfunctions contribute. Kernel functions used in the NPIV scenario has a incredibly large condition number due to the ill-defined convolution integral inversion. We observed that in regular kernel ridge regression, where condition number is small, the perturbative method (addition of the third moment) does not help the zeroth order solution. It is only effective when $\kappa\gg 1$, we shall analyze this phenomenon from the spectral perspective.  

In the case when the condition number is low, all coupling weights $\lambda^2_m \lambda_n \lambda_p \approx \lambda_{\text{typical}}^4$ are similar. $\alpha^{(0)}$ and $\alpha^{(1)}$ both receive ample equal contributions from all the eigenfunctions, so $\alpha^{(1)}$ only adds additional noise to the regressed function. 

However when $\kappa\gg 1$, $\alpha^{(0)}(x_i)K(x, x_i)$ only receives contributions from the eigenfunctions which have large eigenvalues compared to $\lambda$. This is where the triple moment helps, let us analyze case by case: 
\begin{itemize}
\item \textbf{High-to-high coupling}: $m, n, p$ all small $\Rightarrow$ weight $\lambda_m^2 \lambda_n \lambda_p$ is large
\item \textbf{High-to-low coupling}: Mixed indices create medium-strength interactions
\item \textbf{Low-to-low coupling}: $m, n, p$ all large $\Rightarrow$ weight $\lambda_m^2 \lambda_n \lambda_p$ is small
\end{itemize}
We see that $\alpha^{(1)}$ recovers contributions from eigenfunctions with larger eigenvalues as well. This is the manifestation of improving the representation power of the kernel when condition number is high. 

\subsection{The failure of rotationally invariant kernels}
Another issue we can address with the spectral perspective is why rotationally invariant kernels (although having high condition numbers) fit completely trivial lines in high dimensions. The usual kernel ridge regression case is discussed extensively in \cite{donhauser2022fast}, we empirically observe that this is also the case even after adding the third moment term with RBF kernels. The key is the triple eigenfunction integral $S_{abc}$:
\begin{equation}
    S_{abc} = \sum_{i} \phi_a[i]\phi_b[i]\phi_c[i]\,.
\end{equation}
For any rotationally invariant kernel $K(x,y) = k(\|x-y\|)$:
The eigenfunctions can be decomposed into the radial part and spherical harmonics:
\begin{equation}
\phi_{l,m}(x) = R_l(\|x\|) Y_{l,m}(x/\|x\|)\,,
\end{equation}
where $l \in \mathbb{N}_0$ is the radial scaling homogeneity, $m = 1, \ldots, d_l$ where $d_l = \dim(\text{Harm}_l(\mathbb{R}^n))$ and $Y_{l,m}$ are orthonormal spherical harmonics of degree $l$.
The triple product has closed form 
\begin{equation}
\sum_i \phi_a(x_i)\phi_b(x_i)\phi_c(x_i) = \sum_i R_a(r_i)R_b(r_i)R_c(r_i) \times C_{ab}^c(\hat{\omega}_i)\,,
\end{equation}
where $r_i = \|x_i\|$, $\hat{\omega}_i = x_i/\|x_i\|$ and $C_{ab}^c(\hat{\omega}) = Y_a(\hat{\omega})Y_b(\hat{\omega})Y_c(\hat{\omega})$ is the Clebsch-Gordan coefficients. The Clebsch-Gordan coefficients satisfy strict selection rules:
$\langle l_1 m_1, l_2 m_2 | l_3 m_3 \rangle = 0$ unless $|l_1 - l_2| \leq l_3 \leq l_1 + l_2$ and $l_1 + l_2 + l_3 \text{ is even}$ and $m_1 + m_2 + m_3 = 0$.

These would set most of the combinations to $0$, essentially any combination that are not in the representation of the orthogonal group $O(d)$. This explains the failure of RBF kernels even after adding the triple moment term. 

On the other hand for the polynomial kernel $K(x,y) = (x \cdot y + c)^d$ on $\mathbb{R}^n$, eigenfunctions are multinomial in the coordinates:
\begin{equation}
\varphi_a(x) = \sqrt{\lambda_a} \prod_{j=1}^n x_j^{a_j} \times \text{(normalization factor)}\,,
\end{equation}
where $a = (a_1, \ldots, a_n)$ is a multi-index with $|a| = a_1 + \cdots + a_n \leq d$.
The triple product is simply
\begin{align}
&\sum_i \varphi_a(x_i)\varphi_b(x_i)\varphi_c(x_i) \\
&= \text{(normalization)} \times \sum_i \prod_{j=1}^n x_{i,j}^{a_j + b_j + c_j} \\
&= \text{(normalization)} \times \sum_i \prod_{j=1}^n x_{i,j}^{d_j}\,,
\end{align}
where $d_j = a_j + b_j + c_j$.
This is just a power sum in each coordinate which does not impose any selection rule.

%%%%%%%%%%%%%%%%%%%%%%%%%%%%%%%%
\subsection{The need for renormalization}\label{subsec:need_for_renorm}
In this subsection, we perform some simple counting using the usual kernel ridge regression case to demonstrate the inevitable divergence of the naive perturbative series. In the simplest setting where $\widetilde{K}$ is diagonalizable in the kernel eigenbasis \eqref{eq:diagnaol_solutions_two_orders}, from the recursive structure, we can derive that:
\begin{equation}
\alpha^{(n)}_m \sim \frac{\lambda_m^{3n}}{\left(\eta_m + \lambda\right)^{2n+1}} \times \text{geometric factors}\,.
\end{equation}

This leads to the crucial scaling:
\begin{align}
\text{Order 0:} \quad |\alpha^{(0)}_m| &\sim \frac{1}{\eta_m + \lambda}\\
\text{Order 1:} \quad |\alpha^{(1)}_m| &\sim \frac{\lambda_m^3}{(\eta_m + \lambda)^3}\\
\text{Order 2:} \quad |\alpha^{(2)}_m| &\sim \frac{\lambda_m^6}{(\eta_m + \lambda)^5}\\
\text{Order n:} \quad |\alpha^{(n)}_m| &\sim \frac{\lambda_m^{3n}}{(\eta_m + \lambda)^{2n+1}}\,.
\end{align}
We shall assume $\lambda_m\sim \eta_m$ for simplicity.

\paragraph{Well-Conditioned Case ($\kappa$ not divergent)}

When all eigenvalues are similar: $\lambda_m \approx \lambda_{\text{typical}}$ for all $m$. Combining with the coupling $\gamma$:
\begin{equation}
    \frac{\gamma^n|\alpha^{(n)}|}{\gamma^{n-1}|\alpha^{(n-1)}|} \sim \gamma\,\frac{\lambda_{\text{typical}}^3}{(\lambda_{\text{typical}}+\lambda)^2} \begin{cases}
        \lambda_{\text{typical}}\gg \lambda : \gamma<1\\
        \lambda_{\text{typical}} \sim \lambda: \gamma < \frac{4}{\lambda}\\
        \lambda_{\text{typical}} \ll \lambda : \text{always convergent}
    \end{cases}
\end{equation}
We listed the convergence requirement for $\gamma$, as long as we make appropriate choices of $\lambda$ compatible with the ridge regularization parameter, the series will be convergent. Hence we would not need any renormalization anyways since all contributions vanish at high order. The perturbative series terminates naturally, but provides no improvement over zeroth order.

\paragraph{Ill-Conditioned Case ($\kappa \gg 100$)}

Eigenvalues have extreme hierarchy: few large $\lambda_1\gg \lambda_2 \gg \dots \gg \lambda \gg \dots \gg \lambda_N$. The convergence condition for the series becomes:
\begin{equation}
    \frac{\gamma^n|\alpha^{(n)}|}{\gamma^{n-1}|\alpha^{(n-1)}|} \sim\gamma\,\frac{\lambda_m\lambda_p\lambda_q}{(\lambda_p+ \lambda)^2}\,,
\end{equation}
which can go to infinity or $0$ depending on the eigenvalues chosen. The rescaling renormalization we used 
\begin{equation}
    \gamma_n = \gamma^n \times \text{min}\left(\frac{|\alpha^{(0)}|}{|\alpha^{(n)}|}, 1\right)\,,
\end{equation}
automatically implements spectral flow:
\begin{itemize}
\item \textbf{Stable eigenspaces}: Contribute little to $\|\alpha^{(n)}\|$ $\Rightarrow$ remain at full strength
\item \textbf{Unstable eigenspaces}: Dominate $\|\alpha^{(n)}\|$ $\Rightarrow$ get suppressed by small $\mathcal{R}_n$
\item \textbf{Critical eigenspaces}: Drive the renormalization flow
\end{itemize}

%%%%%%%%%%%%%%%%%%%%%%%%%%%%%%%%%%%
\section{Background on resurgence}\label{appendix:resurgence}
Using the mathematical tool of resurgence and Borel resummation \cite{ecalle1981fonctions}, we come up with a further enhancement of the renormalized perturbative series. Note that the series (even after renormalization) is still an asymptotic series. In standard QFT context, this is due to non-perturbative effects such as field configurations that are trivial \cite{Berry1991,Mari_o_2014resurgence, dorigoni2019resurgence,AIHPA_1999__71_1_1_0resurgence, costin2023goingresurgentbridge}, in our case, these are solutions that lie in the null space of the kernel matrix. They are trivialized in the perturbative series yet contributes to the solution space of the ill-defined inverse integral equation. 

The prevalence of saddle points in high dimensional optimization problems is the key issue here. Several facts about them that makes the usual methods fail:
\begin{itemize}
    \item they are never correctly captured by finite truncations of any perturbative series.
    \item they are minimizer of the action/loss function that are "trivial" in the sense that they are not solutions to the perturbative equation of motion truncated at any order. 
    \item They live in the kernel of the perturbative operator, which is difficult to compute and usually the cause of asymptotic series in the perturbative solution.
\end{itemize}
Luckily, Borel resummation is to rescue, it provides a systematic way of analyzing these saddles and incorporate them in the solution. 

You also run into these issues in neural networks, where they appear as saddles of the loss function, which grow in number as the dimension of the loss landscape increases. This is usually tackled using optimizers and schedulers to create local perturbations that destabilizes the saddle configuration. 

\subsection{Introduction to resurgence}
\begin{definition}[Asymptotic Series]
A formal power series $\sum_{n=0}^{\infty} a_n z^n$ is called an asymptotic expansion of $f(z)$ as $z \to 0$ in a sector $S$ if for every $N \geq 0$:
\begin{equation}
f(z) - \sum_{n=0}^{N-1} a_n z^n = O(z^N) \quad \text{as } z \to 0 \text{ in } S\,.
\end{equation}
We write $f(z) \sim \sum_{n=0}^{\infty} a_n z^n$.
\end{definition}

\begin{example}[The Euler Integral]
Consider the integral representation:
\begin{equation}
E(z) = \int_0^{\infty} \frac{e^{-t}}{1 + zt} dt, \quad |\arg(z)| < \frac{\pi}{2}\,.
\end{equation}

Integration by parts yields the asymptotic expansion:
\begin{equation}
E(z) \sim \sum_{n=0}^{\infty} (-1)^n n! z^n \quad \text{as } z \to 0\,.
\end{equation}

This series diverges for all $z \neq 0$ since $a_n = (-1)^n n!$ grows factorially, yet it provides the unique asymptotic expansion of $E(z)$.
\end{example}

For factorially divergent series $\sum a_n z^n$ with $|a_n| \sim n!/r^n$, the optimal truncation occurs at:
\begin{equation}
N^* \approx \frac{r}{|z|}
\end{equation}

The truncation error is exponentially small:
\begin{equation}
\left|f(z) - \sum_{n=0}^{N^*} a_n z^n\right| \sim e^{-r/|z|}\,.
\end{equation}

This exponential accuracy is the hallmark of asymptotic series, but it also reveals their fundamental limitation: exponentially small terms are completely invisible.

\begin{definition}[Borel Transform]
For a formal power series $\phi(z) = \sum_{n=0}^{\infty} a_n z^n$, the Borel transform is:
\begin{equation}
\mathcal{B}[\phi](\zeta) = \sum_{n=0}^{\infty} \frac{a_n}{n!} \zeta^n\,.
\end{equation}
\end{definition}

\begin{remark}
If $|a_n| \sim n!$, then the Borel transform typically has finite radius of convergence, converting a divergent series into a convergent one.
\end{remark}

When $\mathcal{B}[\phi](\zeta)$ can be analytically continued to the positive real axis, we can define:

\begin{definition}[Borel Sum]
The Borel sum of $\phi(z)$ is:
\begin{equation}
\mathcal{S}[\phi](z) = \frac{1}{z} \int_0^{\infty} e^{-\zeta/z} \mathcal{B}[\phi](\zeta) d\zeta\,,
\end{equation}
provided the integral converges.
\end{definition}

\begin{example}[Borel Resummation of Euler Integral]
For $E(z) \sim \sum_{n=0}^{\infty} (-1)^n n! z^n$, the Borel transform is:
\begin{equation}
\mathcal{B}[E](\zeta) = \sum_{n=0}^{\infty} \frac{(-1)^n n!}{n!} \zeta^n = \sum_{n=0}^{\infty} (-\zeta)^n = \frac{1}{1+\zeta}\,.
\end{equation}

The Borel sum gives:
\begin{equation}
\mathcal{S}[E](z) = \frac{1}{z} \int_0^{\infty} \frac{e^{-\zeta/z}}{1+\zeta} d\zeta = E(z)\,.
\end{equation}

Thus, Borel resummation exactly recovers the original function!
\end{example}

With a subtlety that we have secretly changed the position of an integral and a sum, which might not be convergent. For $x<< 1$:
\begin{example}
    \begin{equation}
    \sum_{n=0}^\infty n! x^n = \sum_{n=0}^\infty x^n \int_{0}^\infty dt\, e^{-t}\, t^n = \int_{0}^\infty e^{-t}\,\frac{1}{1-xt}\,.
    \end{equation}
\end{example}
This integral is singular on the positive real axis at $t=\frac{1}{x}$. This forces us to analytically continue and change the contour. There are naturally two choices to do that, which results in an ambiguity in the result. They differ by the residue of the integrand at the singularity. Surprisingly, this residue called Stokes constant also corresponds to the saddle point approximation of the integral representation of the series. This is what we rely on to do the analysis on our series phrased as a minimization problem in high dimensions with multiple saddle points. 

\begin{example}
    We can also show that if an integral representation exists for the equation we are trying to solve, the saddle point approximation of the integral is exactly what the residue at the poles on Borel plane of the Laplace transform is computing. For the Airy equation:
    \begin{equation}
        \frac{d^2 y}{dx^2} - xy = 0\,,
    \end{equation}
    perturbatively, it has the following solution:
    \begin{equation}
        y(x) = \sum_{n=0}^\infty \left(-\frac{3}{4}\right)^n\,\frac{\Gamma(n+5/6)\Gamma(n+1/6)}{n!}\,(x^{-3/2})^n\,.
    \end{equation}
    Its Borel resummation has a singularity at $t = -\frac{2}{3}x^{3/2}$. The residue at the singularity is 
    \begin{equation}
        \text{Res}_{t=-\frac{2}{3}x^{3/2}} \boldsymbol{B}y[t] = e^{2/3 x^{3/2}}\,.
    \end{equation}
    Then we do a saddle point approximation for the Airy function's integral representation:
    \begin{equation}
        y(x) = \int_{0}^\infty e^{t^3/3 - xt}dt  \sim \left.e^{t^3/3-xt}\right\vert_{t=\sqrt{x}}\sqrt{\frac{2\pi i}{1}} = e^{2/3 x^{3/2}} \,,
    \end{equation}
    which precisely matches the residue at the singularity on the Borel plane. Usually it is difficult to solve for the saddle point of the minimizer in high dimensions, we can use the Borel analysis to bypass that. 
\end{example}

What was mentioned here is a generic pattern that we can formulate and prove as a theorem \cite{Bhattacharya:2024hhh}:
\begin{theorem}
    The critical point contributions to the integral representation of an asymptotic series are in one-to-one correspondence with the poles on Borel plane. 
\end{theorem}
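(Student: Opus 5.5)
We make the statement precise in the standard setting of exponential integrals with finitely many nondegenerate critical points, and prove both directions of the correspondence by steepest descent. Let
\begin{equation*}
I(z)=\int_{\mathcal{C}} e^{-S(t)/z}\,\mu(t)\,dt ,
\end{equation*}
where $S$ and $\mu$ are analytic and $S$ has finitely many nondegenerate critical points $t_0,t_1,\dots,t_m$. The asymptotic series in question is the expansion of $I(z)$ about one critical point $t_0$. The claim then reads as follows. The Borel transform $\mathcal{B}[\phi](\zeta)$ of the series at $t_0$ continues analytically to $\mathbb{C}$ minus a discrete set, and its singularities lie exactly at $\zeta_k=S(t_k)-S(t_0)$. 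Each singularity, through the Stokes constant defined as the discontinuity $\mathcal{S}_{C_+}-\mathcal{S}_{C_-}$ as in Section~\ref{subsec:resurgence}, carries the saddle-point contribution of $t_k$. The Euler and Airy examples above are the cases to check against.

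\textbf{Step 1: the Borel transform as a fibre integral.} Change variables to $s=S(t)-S(t_0)$ along the steepest-descent thimble $\mathcal{J}_0$ through $t_0$. This gives the Laplace form
\begin{equation*}
I_0(z)=e^{-S(t_0)/z}\int_0^{\infty} e^{-s/z}\,\rho_0(s)\,ds,\qquad \rho_0(s)=\sum_{t\in S^{-1}(S(t_0)+s)\cap\mathcal{J}_0}\frac{\mu(t)}{S'(t)} .
\end{equation*}
Apply Watson's lemma to the Puiseux expansion of $\rho_0$ at $s=0$ (with half-integer powers for a nondegenerate saddle). This shows that $\rho_0$ is, up to the normalisation $1/z$ and a shift by $\Gamma$-factors, exactly $\mathcal{B}[\phi]$. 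As a result, $\mathcal{B}[\phi]$ inherits analytic continuation from $\rho_0$, since $\rho_0$ is a finite sum of branches of an algebraic-analytic function of $s$.

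\textbf{Step 2: locating the singularities.} The branches of $t(s)$ solving $S(t)=S(t_0)+s$ are analytic wherever $S'(t(s))\neq 0$. So $\rho_0$ can become singular only where two preimages collide, i.e. at the critical values $s=\zeta_k$. This proves ``poles $\subseteq$ critical points''. For the converse we must show that the singularity at $\zeta_k$ is actually present on some sheet. If the thimble $\mathcal{J}_0$ is rotated to the Stokes direction $\arg\zeta_k$, it hits $t_k$ and decomposes as $\mathcal{J}_0^{\pm}=\mathcal{J}_0\pm n_k\mathcal{J}_k$ (Picard--Lefschetz). The local model $S\approx S(t_k)+\tfrac12 S''(t_k)(t-t_k)^2$ then gives
\begin{equation*}
\rho_0(s)\sim \frac{\mu(t_k)}{\sqrt{2S''(t_k)}}\,(s-\zeta_k)^{-1/2}.
\end{equation*}

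\textbf{Step 3: the discontinuity is the saddle contribution.} The lateral Borel sums differ by
\begin{equation*}
\mathcal{S}_{C_+}-\mathcal{S}_{C_-}=n_k\,e^{-S(t_k)/z}\sqrt{\frac{2\pi z}{S''(t_k)}}\,\mu(t_k)\,\bigl(1+O(z)\bigr),
\end{equation*}
which is exactly the Gaussian saddle approximation at $t_k$. This closes the bijection, provided the intersection number $n_k\neq 0$.

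\textbf{Main obstacle.} The hard step is the converse direction: showing $n_k\neq 0$, so that a critical point genuinely produces a singularity on the relevant sheet. Critical points with zero intersection number produce no singularity on the principal sheet, only on further sheets. The theorem therefore holds as a bijection only with the Riemann surface of $\mathcal{B}[\phi]$, counting all sheets, or under a genericity assumption. I would first try to prove it on all sheets, using the monodromy of the family of vanishing cycles and the fact that the Lefschetz thimbles form a basis of relative homology. For the multidimensional case, or for the paper's loss $S_{\text{pert}}$, I would impose a Morse condition and a growth condition at infinity so that the thimble decomposition exists.
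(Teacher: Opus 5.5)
Your Step~1 and the first half of Step~2 are the paper's argument, specialised to one complex dimension. The paper writes $f(g)=\frac{1}{g}\int_0^\infty e^{-t/g}\bigl(g\int\delta(t-S(x))\,dx\bigr)dt$ and uses the co-area formula to identify the Borel transform with $\int_{S=t}d\sigma/|\nabla S|$. From this it reads off that the transform is singular where $\nabla S=0$. Your fibre sum over $S^{-1}(S(t_0)+s)$ is the same identity, with zero-dimensional fibres and counting measure.

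The paper stops there. So it supports only the inclusion ``singularities $\subseteq$ critical values'': its ``precisely'' is asserted, not argued. It also defines the Stokes constant as a residue.

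\begin{itemize}
\item Your Picard--Lefschetz step supplies the converse and matches the discontinuity with the Gaussian contribution of $t_k$.
\item Your local model shows the singularities are branch points, not poles. For the paper's own Airy series the Borel transform is a multiple of ${}_2F_1(\tfrac{1}{6},\tfrac{5}{6};1;-\tfrac{3}{4}\zeta)$, which is logarithmic at $\zeta=-\tfrac{4}{3}$. So ``discontinuity'' is the right notion.
\item Your $n_k=0$ caveat is a genuine correction to ``one-to-one''.
\end{itemize}

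As for what each route buys: the paper's real co-area version works in any dimension at no cost, but it only sees non-smoothness along the real $t$-axis. Complexifying, as you do, is what gives ``singularity on the Borel plane'' a meaning.

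In one dimension your all-sheets plan does close. For a Morse polynomial with distinct critical values, the monodromy of $S:\mathbb{C}\to\mathbb{C}$ is transitive and generated by transpositions, hence it is the full symmetric group. So every $\zeta_k$ appears on some sheet, provided $\mu(t_k)\neq 0$.

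Two things need tightening.

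First, the claim ``$\rho_0$ can become singular only where two preimages collide'' fails under your stated hypotheses. Preimages can escape to infinity at a finite asymptotic value of $S$, and $\mu$ can itself be singular.
\begin{itemize}
\item Take $S(t)=\tfrac{1}{2}(e^{t^2}-1)$. It has the single critical point $t=0$, and $\int_{\mathbb{R}}e^{-S/z}\,dt$ converges. Yet $\rho_0(s)=2/\bigl((1+2s)\sqrt{\log(1+2s)}\bigr)$ is singular at $s=-\tfrac{1}{2}$, which is the limit of $S$ along $i\mathbb{R}$. So the Borel radius is $\tfrac{1}{2}$ with no second saddle. This is exactly the second source of singularities the paper itself lists after its proof.
\item The Euler example you cite has $S(t)=t$ and no critical point. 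Its pole at $-1$ comes from $\mu=1/(1+t)$. It fits the theorem only after lifting to $z^{-2}\iint e^{-(\zeta+v+v\zeta)/z}\,dv\,d\zeta$, whose critical point $(-1,-1)$ has value $-1$.
\end{itemize}
So assume $S$ polynomial (your word ``algebraic'' suggests you intended this) and $\mu$ entire.

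Second, the fibre sum needs orientation signs; the paper's $|\nabla S|$ does this in the real case. For the two preimages $t_\pm(s)$ on $\mathcal{J}_0$ you have $S'(t_\pm(s))\approx\pm\sqrt{2S''(t_0)s}$. The unsigned sum therefore cancels the two $s^{-1/2}$ terms, and Watson's lemma would then miss the leading Gaussian term.
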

\textit{proof:} A simple proof for this involves a trick in measure theory named the co-area formula, which is used to reduce the dimension of an integral onto some lower dimensional domain level set. 
\begin{equation}
    \int_{\Omega\in\mathbb{R}^n} g(x) |J_k u(x)| dx = \int_{\mathbb{R}^k} \left(\int_{u^{-1}(t)} g(x)dH_{n-k}(x)\right)\, dt\,,
\end{equation}
where $k<n$, $u(x)=t$ labels the $n-k$ dimensional level set, $|J_k u(x)|$ is its $k$-dim Jacobian and $dH_{n-k}$ represents the appropriate Hausdorff measure on the level set. Multiplying both sides with a delta function to indicate the level set $\delta(t-u(x))$. Then we have
\begin{equation}
    \int_{\Omega} g(x) \delta(t-u(x)) |\nabla u(x)| dx = \int_{t=u(x)} g(x) \,dH(x)\,.
\end{equation}
Now choosing $g(x) = \frac{1}{|\nabla u(x)|}$ gives 
\begin{equation}\label{co-area-trick}
    \int_{\Omega} \delta(t-u(x)) = \int_{t= u(x)} \frac{d H(x)}{|\nabla u(x)|} \,.
\end{equation}
We shall use this formula in the following proof. 

Writing the multivariant function as an asymptotic series $f(g) = \sum_{n=0}^{\infty}a_n g^n$. We denotes its Borel transform and the corresponding inverse transform:
\begin{equation}
    B[f](t) = \sum_{n=0}^{\infty} \frac{a_n}{n!} (t)^n\,.
\end{equation}
And 
\begin{equation}\label{eq:compare}
    \mathcal{B}(B[f])(g) = \frac{1}{g}\int_0^\infty e^{-t/g} B(t)\,.
\end{equation}
If an integral representation also exists:
\begin{equation}
    f(g) = \int e^{-S(x)/g} \,d x\,.
\end{equation}
Then we can write it on a level set then integrate over the choice of level:
\begin{equation}
    f(g) = \frac{1}{g}\int_0^{\infty} d t\, e^{-t/g} \left(g\int d x\,\delta(t-S(x))\right)\,,
\end{equation}
where we have already replaced the exponentiated $S(x)$ with $t$. Using the co-area trick \eqref{co-area-trick}, we have
\begin{equation}
    f(g) = \frac{1}{g}\int_{0}^\infty dt\, e^{-t/g} \,\left(g\int_{t=S(x)} \frac{d \sigma(x)}{|\nabla S(x)|}\right)\,,
\end{equation}
where $d\sigma(x)$ is the appropriate measure. Comparing this with the Laplace transform formula \eqref{eq:compare}. We see that the Borel transform can be written as 
\begin{equation}
  B\left[\frac{1}{g}\,f\right](t) = \int_{t=S(x)} \frac{d \sigma(x)}{|\nabla S(x)|} \,.
\end{equation}
It is easy to see that singularities of the Borel transform occur precisely when $\nabla S(x)=0$, which are critical points of the integral representation.  This is a fact that can be extremely useful for more general usage of resurgence in neural networks\footnote{To be elaborated in a separate section, off the the scope of this project.}. 

A further way to express the Borel transform derived from this is 
\begin{equation}
    B[f(g)](t) = \int d^n x \,\Theta(t-S(x))\,,
\end{equation}
where we see Borel transform of $f$ is the volume of domain for which $S(x)\leq t$. There are three possible sources of divergence or singularity on Borel plane. 
\begin{itemize}
    \item S(x) is critical at $S(x)=t_*$
    \item S(x) goes to some finite $t_*$ as $x$ goes to infinity, the volume integral diverges
    \item At some $t\to t_{*}$, $S(x)$ has infinite volume.
\end{itemize}

\subsection{The algorithm}
We include the detailed algorithm for resurgence \ref{algo:resurgence} and full implementation of renormalization of perturbative series here, which asymptotically has a time complexity of $\mathcal{O}(n^2)$, with $n$ data points.
\begin{algorithm}
\caption{Perturbative NPIV with Renormalization}
\begin{algorithmic}\label{algo:renormalization}
\STATE \textbf{Input:} Data $(X, Z, Y)$, kernel function $K$, ridge regularization coefficient $\lambda$, base perturbation parameter $\gamma$, max order $M$, $\epsilon=10^{-4}$ 
\STATE \textbf{Output:} Coefficient vectors $\alpha^{(i)}$

\STATE Compute from dataset $K_{ij}$, $h_i$, $K_{ijk}$
\STATE Solve $(\widetilde{K} + \lambda K)\alpha^{(0)} = h$ for zeroth-order solution
\STATE $\|\alpha^{(0)}\| \leftarrow$ norm of zeroth-order coefficients

\FOR{$m = 1$ to $M$}
    \STATE Compute right-hand side: $rhs_i^{(m)} = -3 \sum_{j,k} K_{ijk}(\alpha^{(0)}_j\alpha^{(m-1)}_k+\dots+\alpha^{(m-1)}_j\alpha^{(0)}_k)$
    \STATE Solve $(\widetilde{K} + \lambda K)\alpha^{(m)} = rhs^{(m)}$
    \STATE $\|\alpha^{(m)}\| \leftarrow$ norm of $m$-th order coefficients
    
    \IF{renormalization needed}  \STATE Compute adaptive gamma: $\gamma_m = \gamma^m \times \min(1, \|\alpha^{(0)}\| / \|\alpha^{(m)}\|)$
        \IF{$\|\alpha^{(0)}\| / \|\alpha^{(m)}\| < \epsilon$}
            \STATE $\gamma_m = 0$ \COMMENT{Zero out correction if too unstable}
        \ENDIF
    \ELSE \STATE $\gamma_m = \gamma^m$ \COMMENT{Use standard perturbation parameter}
    \ENDIF
\ENDFOR

\STATE \textbf{return} $\{\alpha^{(n)}\}_{n=0}^M$
\end{algorithmic}
\end{algorithm}

\begin{algorithm}
\caption{Resurgence algorithm for asymptotic series}
\begin{algorithmic}\label{algo:resurgence}
\STATE \textbf{Input:} the raw perturbative series $\{\alpha^{(i)}\}_{i=0}^M$, $\{\gamma_i\}_{i=1}^M$

\IF{Not asymptotic series:}
    \STATE \textbf{return} renormalised series: $g(x_{new}) = \sum_{n=0}^M \sum_i \gamma_n \alpha_i^{(n)} K(x_{new}, x_i)$
\ELSE
    \FOR{$i = 1$ to $|\alpha|$}
        \STATE Compute Borel transform: $\hat{\alpha}_i(s) = \sum_{n=0}^M \frac{\alpha_i^{(n)}}{n!} s^n$
    \ENDFOR
    \STATE Construct Borel transformed function: $\hat{g}(tX) = \sum_i \hat{\alpha}_i(\gamma t) K(tX, X_i)$

    \STATE Initialize singularity set: $\mathcal{S} = \emptyset$
    \FOR{$t$ on complex contour around origin}
        \IF{$|\hat{g}(tX)| > \frac{1}{\delta}$ or $\hat{g}(tX)$ undefined}
            \STATE $\mathcal{S} \leftarrow \mathcal{S} \cup \{t\}$
        \ENDIF
    \ENDFOR
    \STATE Let $\{t_k\}_{k=1}^m$ be the dominant singularities from $\mathcal{S}$
    
    \FOR{$k = 1$ to $m$}
        \STATE Compute Stokes constant via residue theorem:
        \STATE $C_k \leftarrow \text{Res}_{t=t_k} \hat{g}(tX)$ \COMMENT{Residue at singularity $t_k$}
        
        \FOR{$i = 1$ to $|\alpha|$}
            \STATE $g_{k,i}(X) \leftarrow$ analytic continuation of $\hat{\alpha}_i$ around $t_k$
        \ENDFOR
    \ENDFOR
    
    \FOR{prediction point $x_{new}$}
        \STATE Base perturbative series: $g_{pert}(x_{new}) \leftarrow \sum_{n=0}^M \sum_i \gamma_n \alpha_i^{(n)} K(x_{new}, x_i)$
        
        \STATE Trans-series corrections: 
        \STATE $g_{trans}(x_{new}) \leftarrow \sum_{k=1}^m C_k \exp\left(-k \frac{t_k(x_{new})}{x_{new}}\right) \sum_i g_{k,i}(x_{new}) K(x_{new}, x_i)$
        
        \STATE Combined prediction: $\Phi(x_{new}) \leftarrow g_{pert}(x_{new}) + g_{trans}(x_{new})$
    \ENDFOR
    \STATE \textbf{return} $\Phi(X)$
\ENDIF

\STATE \textbf{return} $\{\alpha^{(i)}\}_{i=0}^M$, $\{\gamma_i\}_{i=1}^M$, $\Phi(x)$
\end{algorithmic}
\end{algorithm}
%%%%%%%%%%%%%%%%%%%%%%%%%%%%%

%%%%%%%%%%%%%%%%%%%%%%%%%%%%%%%%%%%%
\section{The irrelevance of physical renormalization}\label{appendix:irrelevance_of_physical_renormalization}
In physics, the Wilsonian paradigm of renormalization is based on the following philosophy. If there is a scale $\Lambda$ one introduces in a physical system beyond which the degrees of freedom is not considered. The physical observables of the system should not depend on the magnitude of $\Lambda$. This requirement creates a spectral flow automatically called the renormalization group flow, adjusting the coupling constant accordingly. In our case, there is a beautiful analogy. Recall the series ansatz for the regressed solution:
\begin{equation}
    g(x) = \sum_{n=0}^\infty\sum_{i} \gamma^n\alpha^{(n)}(X_i) K(X_i, X)\,.
\end{equation}
We shall argue that the regularization scale $\lambda$ automatically serves as the cut-off scale $\Lambda$. To see this, we define the following quantity:
\begin{equation}
    \rho = \frac{\lambda_m}{\lambda_m + \lambda}\,,
\end{equation}
which when $\lambda \gg \lambda_m$, $\rho\to 0$ and when $\lambda \ll \lambda_m$, $\rho\to 1$. Hence it roughly is measuring the amount of eigenvalues larger than $\lambda$, the ones below $\lambda$ do not really contribute. Taking the target function $g(X)$, a physical renormalization process is triggered by recognizing that $g(X)$ should not depend on the choice of $\lambda$\footnote{This is not quite the case in regression, since $\lambda$ controls the bias/variance trade off, its scale physically impacts the regressed result.}. Although it is intriguing to make the exact identification with physics, it is not quite correct due the aforementioned reason. We think it is nevertheless important to include this in the script.

Nevertheless, if we follow this logic, we see that scale independence of $g(X)$ implies the Euler vector action:
\begin{equation}
    \lambda\frac{d}{d\lambda}\,g(X) = 0\,.
\end{equation}
This can not be achieved if $\gamma$ is independent of $\lambda$, hence this physical requirement and the existence of imbalanced condition number creates a run on the coupling $\gamma$, demanding it to depend on $\lambda$. More specifically, after some computations, one finds:
\begin{equation}
    \beta(\lambda)= \lambda\frac{d}{d\lambda} = 
    -\frac{\sum_n\gamma_n\sum_m\frac{\lambda}{\lambda_m + \lambda}\left(\partial_\lambda(\phi_m^T\,\text{RHS}_n)-\alpha^{(n)}_m(\lambda)\right)\phi_m}{\sum_n \gamma_{n-1}\sum_m\alpha^{(n)}_m(\lambda)\phi_m}\,,
\end{equation}
where all regimes of $\lambda_m$ contributes. Our method of rescaling the coupling $\gamma$ based on the norm ratio of the $\alpha^{(n)}$s is inspired by the physical method of renormalization.

%%%%%%%%%%%%%%%%%%%%%%%%%%%%%%%%%%%%
\section{Generalized Representer Theorem for Perturbative NPIV}\label{appendix:representer_theorem_generalized}

We extend the classical representer theorem to accommodate higher-order interaction terms. This section provides a derivation of why our series expansion method maintains a finite-dimensional representation at each order, even with cubic interaction terms.

\paragraph{Problem Formulation and Notation}

Let us begin by precisely defining the mathematical framework:

\begin{definition}[Reproducing Kernel Hilbert Space]
A Reproducing Kernel Hilbert Space (RKHS) $\mathcal{H}$ with kernel $K: \mathcal{X} \times \mathcal{X} \rightarrow \mathbb{R}$ is a Hilbert space of functions $f: \mathcal{X} \rightarrow \mathbb{R}$ such that:
\begin{enumerate}
    \item For all $x \in \mathcal{X}$, the function $K(\cdot, x)$ belongs to $\mathcal{H}$.
    \item For all $x \in \mathcal{X}$ and all $f \in \mathcal{H}$, the reproducing property holds: $f(x) = \langle f, K(\cdot, x) \rangle_{\mathcal{H}}$.
\end{enumerate}
\end{definition}

\begin{definition}[Conditional Expectation Operators]
Let $X$ and $Z$ be random variables with joint distribution $P_{X,Z}$. We define:
\begin{enumerate}
    \item The conditional expectation operator $T: \mathcal{H} \rightarrow L^2(P_Z)$ as $(Tf)(z) = \mathbb{E}[f(X)|Z=z]$.
    \item The adjoint operator $T^*: L^2(P_Z) \rightarrow \mathcal{H}$ satisfies $\langle Tf, g \rangle_{L^2(P_Z)} = \langle f, T^*g \rangle_{\mathcal{H}}$ for all $f \in \mathcal{H}$ and $g \in L^2(P_Z)$.
\end{enumerate}
\end{definition}

We consider the nonparametric instrumental variable (NPIV) problem with a cubic interaction term:
\begin{align}
S[f] &= S_0[f] + \gamma S_1[f]\\
&=\mathbb{E}_Z\left[\left(\mathbb{E}[Y|Z] - \mathbb{E}[f(X)|Z]\right)^2\right] + \lambda\|f\|^2_{\mathcal{H}} + \gamma S_1[f]\,,
\end{align}
where $S_1[f]$ represents the non-independent three-point interaction:
\begin{align}
S_1[f] = \frac{2}{3}\mathbb{E}_Z\left[\mathbb{E}\left[f(X)^3 \,\bigg|\, Z\right]\right]\,.
\end{align}
The expectation over $Z$ (notated as $\mathbb{E}_Z$) is needed since the moment condition $\mathbb{E}[Y - g(X) | Z] = 0$ requires considering all possible values of $Z$, it is also what we compute in the acutal implementation.

Our goal is to find the function $f^* \in \mathcal{H}$ that minimizes $S[f]$. We approach this using a perturbative expansion:
\begin{align}
f(x) = f_0(x) + \gamma f_1(x) + \gamma^2 f_2(x) + \ldots\,.
\end{align}

\paragraph{Functional Derivatives and Variational Calculus}

To apply variational methods, we must understand functional derivatives:

\begin{definition}[Functional Derivative]
The functional derivative of a functional $S[f]$ with respect to the function $f$ at point $x$, denoted $\frac{\delta S[f]}{\delta f(x)}$, represents the rate of change of $S[f]$ when $f$ is perturbed infinitesimally at the point $x$:
\begin{align}
\frac{\delta S[f]}{\delta f(x)} = \lim_{\epsilon \to 0} \frac{S[f + \epsilon\delta_x] - S[f]}{\epsilon}\,,
\end{align}
where $\delta_x$ is the Dirac delta function centered at $x$.
\end{definition}

% \begin{lemma}[Functional Derivatives of Common Terms]
% The following functional derivatives will be useful in our analysis:
% \begin{align}
% \frac{\delta}{\delta f(x)}\int f(y)g(y)dy &= g(x)\\
% \frac{\delta}{\delta f(x)}\int f(y)^2g(y)dy &= 2f(x)g(x)\\
% \frac{\delta}{\delta f(x)}\int f(y)^3g(y)dy &= 3f(x)^2g(x)\\
% \frac{\delta}{\delta f}\|f\|^2_{\mathcal{H}} &= 2f\,.
% \end{align}
% \end{lemma}

The adjoint operator $T^*$ plays a crucial role in our derivations:
\begin{proposition}[Adjoint of $T$ in an RKHS]
For $(Tf)(z)=\E[f(X)\mid Z=z]$ with RKHS kernel $K$, the adjoint $T^*:L^2(P_Z)\to\mathcal{H}$ is
\[
T^* g \;=\; \E\!\big[g(Z)\,K(\cdot,X)\big].
\]
\end{proposition}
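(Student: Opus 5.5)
The plan is to verify the defining identity of the adjoint directly: for all $f\in\mathcal{H}$ and $g\in L^2(P_Z)$,
\[
\langle Tf, g\rangle_{L^2(P_Z)} = \langle f, \E[g(Z)K(\cdot,X)]\rangle_{\mathcal{H}} .
\]
Since the adjoint of a bounded operator is unique, this identity gives the claim. I will work under the standing regularity assumption that $\E[K(X,X)]<\infty$, for instance a bounded kernel. This guarantees that $f\mapsto f(X)$ is square integrable, with $|f(X)|\le \|f\|_{\mathcal{H}}\sqrt{K(X,X)}$, so that $T$ is bounded from $\mathcal{H}$ into $L^2(P_Z)$.

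\textbf{Step 1: rewrite the left-hand side.} Expand the $L^2(P_Z)$ inner product and apply the tower property:
\[
\langle Tf,g\rangle_{L^2(P_Z)} = \E_Z\big[\E[f(X)\mid Z]\,g(Z)\big] = \E\big[f(X)g(Z)\big],
\]
where the last expectation is over the joint law $P_{X,Z}$. The tower property applies because $g(Z)$ is $Z$-measurable. Integrability of $f(X)g(Z)$ follows from Cauchy--Schwarz together with the bound on $|f(X)|$ above.

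\textbf{Step 2: use the reproducing property and move the expectation inside.} Write $f(X)=\langle f, K(\cdot,X)\rangle_{\mathcal{H}}$, so that
\[
\E[f(X)g(Z)] = \E\big[\langle f, g(Z)K(\cdot,X)\rangle_{\mathcal{H}}\big].
\]
Next, interpret $\E[g(Z)K(\cdot,X)]$ as a Bochner integral in $\mathcal{H}$. It exists because
\[
\E\|g(Z)K(\cdot,X)\|_{\mathcal{H}} = \E\big[|g(Z)|\sqrt{K(X,X)}\big] \le \|g\|_{L^2(P_Z)}\,\big(\E K(X,X)\big)^{1/2} < \infty .
\]
The map $\langle f,\cdot\rangle_{\mathcal{H}}$ is a continuous linear functional, and Bochner integrals commute with such functionals. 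Hence
\[
\E\big[\langle f, g(Z)K(\cdot,X)\rangle_{\mathcal{H}}\big] = \big\langle f, \E[g(Z)K(\cdot,X)]\big\rangle_{\mathcal{H}},
\]
which is the required identity.

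The main obstacle is justifying this last interchange of expectation and inner product, namely showing that $\E[g(Z)K(\cdot,X)]$ is a well-defined element of $\mathcal{H}$. This is exactly where the moment condition on the kernel is used. As an alternative I could avoid Bochner integrals. The functional $f\mapsto \E[f(X)g(Z)]$ is bounded on $\mathcal{H}$, so by Riesz it has a representer $r\in\mathcal{H}$. Evaluating at $f=K(\cdot,x)$ gives $r(x)=\E[g(Z)K(x,X)]$ pointwise, which identifies $r$ with the stated formula.
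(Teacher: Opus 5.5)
Your proposal is correct and follows the same route as the paper: the tower property turns $\langle Tf,g\rangle_{L^2(P_Z)}$ into $\E[f(X)g(Z)]$, the reproducing property gives $f(X)=\langle f,K(\cdot,X)\rangle_{\mathcal{H}}$, and the expectation is then moved inside the $\mathcal{H}$ inner product. The paper makes this last interchange without comment, so your assumption $\E[K(X,X)]<\infty$ together with the Riesz-representer argument supplies the justification the paper's three-line chain omits; the Riesz route is the cleaner of your two, since the Bochner route additionally needs strong measurability of the feature map (for instance via separability of $\mathcal{H}$), not just the norm bound.
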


\begin{proof}
For any $f\in\mathcal{H}$ and $g\in L^2(P_Z)$,
\begin{equation}
\langle Tf, g\rangle_{L^2(P_Z)}
= \E\!\big[\E[f(X)\mid Z]\;g(Z)\big]
= \E\!\big[\langle f,K(\cdot,X)\rangle_{\mathcal{H}}\;g(Z)\big]
= \langle f,\,\E[g(Z)\,K(\cdot,X)]\rangle_{\mathcal{H}}.
\end{equation}
\end{proof}

%%%%%%%%%%%%%%%%%%%%%%%%%%%%%%%%
\begin{proposition}[Explicit Form of the Adjoint Operator]
For the conditional expectation operator $T$ defined above, the adjoint operator $T^*: L^2(P_Z) \rightarrow \mathcal{H}$ can be explicitly written as:
\begin{align}
(T^*g)(x) = \mathbb{E}[g(Z)|X=x]\,.
\end{align}
\end{proposition}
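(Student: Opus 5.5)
The plan is to read the statement as the adjoint of $T$ when $T$ is viewed as a map $L^2(P_X)\to L^2(P_Z)$, to prove that identity directly with the tower property, and then to relate it to the RKHS adjoint from the preceding proposition. Concretely, for $f\in L^2(P_X)$ and $g\in L^2(P_Z)$ I would write
\begin{equation*}
\langle Tf,g\rangle_{L^2(P_Z)}=\E\big[\E[f(X)\mid Z]\,g(Z)\big]=\E\big[f(X)g(Z)\big]=\E\big[f(X)\,\E[g(Z)\mid X]\big]=\langle f,\E[g(Z)\mid X=\cdot\,]\rangle_{L^2(P_X)}.
\end{equation*}
The first and third equalities are both the tower property, conditioning once on $Z$ and once on $X$. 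Integrability comes from Cauchy--Schwarz, since $f(X)g(Z)\in L^1$ whenever $f\in L^2(P_X)$ and $g\in L^2(P_Z)$. The map $g\mapsto \E[g(Z)\mid X=\cdot\,]$ is therefore the adjoint in the $L^2$ pairing. It is bounded with norm at most one by conditional Jensen, and adjoints are unique, which finishes this part.

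The second step reconciles this with the preceding proposition, which gives $T^*g=\E[g(Z)K(\cdot,X)]$ for the $\mathcal H$-inner product. Evaluating that at a point $x$ and conditioning on $X$ inside the expectation gives
\begin{equation*}
(T^*_{\mathcal H}g)(x)=\E\big[K(x,X)\,\E[g(Z)\mid X]\big]=\big(B\,\E[g(Z)\mid X=\cdot\,]\big)(x),
\end{equation*}
where $B$ is the Mercer integral operator from Appendix~\ref{appendix:spectral_explanation_of_triple_moment_int}. So $T^*_{\mathcal H}=B\circ T^*_{L^2}$, which is the standard identity relating the two pairings through the inclusion $\mathcal H\hookrightarrow L^2(P_X)$, whose adjoint is $B$.

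The main obstacle is interpretive rather than computational. Taken literally with the $\mathcal H$-pairing of the definition, the formula $(T^*g)(x)=\E[g(Z)\mid X=x]$ is false in general. It holds only in the $L^2(P_X)$ sense, or after applying $B$. My proof would therefore state explicitly which pairing is used. I would prove the $L^2$ statement as the main claim and record the composite formula $T^*_{\mathcal H}=B\circ T^*_{L^2}$ as the $\mathcal H$-version. I would also note that $\E[g(Z)\mid X=\cdot\,]$ need not lie in $\mathcal H$, which is another reason the $\mathcal H$-version must include $B$.
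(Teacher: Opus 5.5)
Your chain of equalities is the paper's proof: the paper disintegrates $P_{X,Z}$ first over $Z$ and then over $X$, which is exactly your two uses of the tower property.

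The difference is in the last line. There the paper writes $\int f(x)\,\E[g(Z)\mid X=x]\,dP_X(x)=\langle f,T^*g\rangle_{\mathcal H}$. That integral is the $L^2(P_X)$ pairing, not the $\mathcal H$ pairing, which is precisely the step you refuse to take. So the paper's argument establishes only the $L^2$ adjoint. Read with the $\mathcal H$-inner product of the preceding definition, the proposition contradicts the proposition just before it. For instance, with $g\equiv 1$ the $\mathcal H$-adjoint is the mean embedding $\E[K(\cdot,X)]$, which is not the constant $1$ in general.

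Your plan is the correct reading and removes that inconsistency between the two propositions. You prove the $L^2(P_X)\to L^2(P_Z)$ adjoint as the main claim and record $T^*_{\mathcal H}=B\circ T^*_{L^2}$ separately.

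The one hypothesis worth stating explicitly is that $\mathcal H$ embeds continuously in $L^2(P_X)$, for example via $\E[K(X,X)]<\infty$. You need this both for the adjoint of the inclusion to be $B$ and for the Bochner integral $\E[g(Z)K(\cdot,X)]$ to define an element of $\mathcal H$.
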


\begin{proof}
For any $f \in \mathcal{H}$ and $g \in L^2(P_Z)$, we have:
\begin{align}
\langle Tf, g \rangle_{L^2(P_Z)} &= \int (Tf)(z)g(z)dP_Z(z)\\
&= \int \mathbb{E}[f(X)|Z=z]g(z)dP_Z(z)\\
&= \iint f(x)dP_{X|Z}(x|z)g(z)dP_Z(z)\\
&= \iint f(x)g(z)dP_{X,Z}(x,z)\\
&= \int f(x) \left(\int g(z)dP_{Z|X}(z|x)\right)dP_X(x)\\
&= \int f(x)\mathbb{E}[g(Z)|X=x]dP_X(x)\\
&= \langle f, T^*g \rangle_{\mathcal{H}}
\end{align}
Thus, $(T^*g)(x) = \mathbb{E}[g(Z)|X=x]$.
\end{proof}

\paragraph{Zeroth-Order Solution}

For $\gamma = 0$, we consider the standard regularized NPIV problem:

\begin{equation}
\min_{f \in \mathcal{H}} S_0[f] = \min_{f \in \mathcal{H}} \left\{\mathbb{E}_Z\left[\left(\mathbb{E}[Y|Z] - \mathbb{E}[f(X)|Z]\right)^2\right] + \lambda\|f\|^2_{\mathcal{H}}\right\}\,.
\end{equation}

Using operator notation, we can write this as:
\begin{align}
S_0[f] = \|h - Tf\|_{L^2(P_Z)}^2 + \lambda\|f\|^2_{\mathcal{H}}\,,
\end{align}
where $h(z) = \mathbb{E}[Y|Z=z]$.

\begin{theorem}[Zeroth-Order Representer Theorem]
The minimizer $f_0$ of $S_0[f]$ has the form:
\begin{align}
f_0(x) = \sum_{i=1}^n \alpha_i^{(0)} K(x, x_i)\,,
\end{align}
where $\{x_1, x_2, \ldots, x_n\}$ are the observed data points. The coefficients $\alpha^{(0)}$ are the solution to:
\begin{align}
(\widetilde{K} + \lambda K)\alpha^{(0)} = h\,,
\end{align}
where $\widetilde{K}_{ij} = \E[\int\int K(x,x_i) K(x',x_j)f(x|Z)f(x'|Z)dxdx']$, $K_{ij} = K(x_i, x_j)$ and $h_i = (T^*h)(x_i)$.
\end{theorem}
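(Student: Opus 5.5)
The argument is the standard orthogonal-decomposition proof of the representer theorem, adapted to the operator $T$. It has to be run on the empirical version of the objective: the population problem has no reason to produce a finite kernel expansion. So we first replace $\E_Z$ and $\E[\cdot\mid Z]$ by the empirical quantities actually computed in the implementation section. Concretely, $(Tf)(z_l)$ is approximated by $\sum_r w_r(z_l) f(x_r)$, built from Nadaraya–Watson weights or the conditional mean embedding $b(z)$, with the $x_r$ drawn from the sample $\{x_1,\dots,x_n\}$. After this step, the data-fit term $\|h-Tf\|^2$ depends on $f$ only through the evaluations $f(x_1),\dots,f(x_n)$, so $S_0[f]=\Psi(f(x_1),\dots,f(x_n))+\lambda\|f\|_{\mathcal H}^2$ for a convex quadratic $\Psi$.

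Next, decompose any $f\in\mathcal H$ as $f=f_\parallel+f_\perp$, with $f_\parallel\in\mathrm{span}\{K(\cdot,x_i)\}$ and $f_\perp$ orthogonal to that span. By the reproducing property, $f_\perp(x_i)=\langle f_\perp,K(\cdot,x_i)\rangle_{\mathcal H}=0$, so $\Psi$ does not change when $f$ is replaced by $f_\parallel$. At the same time, $\|f\|^2_{\mathcal H}=\|f_\parallel\|^2+\|f_\perp\|^2$. Hence every minimizer has $f_\perp=0$, which gives $f_0=\sum_i\alpha^{(0)}_iK(\cdot,x_i)$. Since $\lambda>0$, $S_0$ is strictly convex, so the minimizer exists and is unique.

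To obtain the coefficient equation, substitute $f=\sum_j\alpha_jK(\cdot,x_j)$. Then $Tf=\sum_j\alpha_j\E[K(X,x_j)\mid Z]$, and expanding the square gives the quadratic form $\alpha^T\widetilde K\alpha-2\alpha^Th+\lambda\alpha^TK\alpha$ plus a constant. Here $\widetilde K_{ij}=\E_Z\big[\E[K(X,x_i)\mid Z]\,\E[K(X,x_j)\mid Z]\big]$, which is exactly the stated double integral. For the linear term, $h_i=\E_Z\big[h(Z)\,\E[K(X,x_i)\mid Z]\big]=\langle T K(\cdot,x_i),h\rangle_{L^2(P_Z)}=\langle K(\cdot,x_i),T^*h\rangle_{\mathcal H}=(T^*h)(x_i)$. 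This uses the adjoint proposition, $T^*h=\E[h(Z)K(\cdot,X)]$, together with the reproducing property. Setting the gradient in $\alpha$ to zero gives $(\widetilde K+\lambda K)\alpha^{(0)}=h$, matching \eqref{eq:order_0}. If $K$ is singular, we either take the minimum-norm solution or note that components of $\alpha$ in $\ker K$ leave $f$ unchanged.

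The main obstacle is the first step. For the genuine population objective, $Tf$ involves $f$ on the whole support of $f(x\mid Z)$, and $f_\perp$ need not vanish there. The orthogonality argument then fails, and the minimizer is $(T^*T+\lambda)^{-1}T^*h$, which in general is not a finite kernel sum. I would therefore state explicitly that the theorem is proved for the empirical objective, where the conditional integrals are supported on the sample points. Equivalently, we can restrict to the sample-based first stage, where $f\mapsto\widehat{Tf}$ factors through $(f(x_i))_i$. Once that is granted, the rest is routine.
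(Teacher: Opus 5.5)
Your proof is correct. It shares the paper's skeleton: orthogonal decomposition forces the finite expansion, and substitution then yields the linear system. The difference lies at the one point that actually matters.

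The paper writes the stationarity condition $T^*(h-Tf)=\lambda f$ and then applies ``the standard representer theorem argument'' directly to the population functional $S_0$. As you observe, that argument needs $Tf$ to depend on $f$ only through $f(x_1),\dots,f(x_n)$. This fails for the population operator, whose minimizer $(T^*T+\lambda I)^{-1}T^*h$ makes no reference to the sample. So your restriction to the empirical Nadaraya--Watson or conditional-mean-embedding operator $\hat T$ is what makes the theorem true, not a technicality.

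The cost is that $\widetilde K$ and $h$ in the conclusion must be read with the empirical weights in place of $f(x\mid Z)$ and $P_Z$. Your phrase ``exactly the stated double integral'' holds only under that plug-in reading. The expansion points must also be the points at which $\hat T$ evaluates $f$. The population formulas in the statement are literally right under the other natural reading: minimize over $\mathrm{span}\{K(\cdot,x_i)\}$ from the outset, as the main text does, in which case the representer claim holds by fiat.

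For the coefficients, the routes also differ. The paper substitutes the expansion into the stationarity equation and evaluates at the $x_i$. This works because $(T^*TK(\cdot,x_j))(x_i)=\langle TK(\cdot,x_i),TK(\cdot,x_j)\rangle_{L^2(P_Z)}=\widetilde K_{ij}$. You instead differentiate the finite-dimensional quadratic in $\alpha$. Both give the same normal equations $(\widetilde K+\lambda K)\alpha=h$.

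In your empirical setting, the paper's stationarity route also gives the representer form in one line. Here $\hat T^*g=\frac1n\sum_{l}g(z_l)\sum_r w_r(z_l)K(\cdot,x_r)$ lies in $\mathrm{span}\{K(\cdot,x_r)\}$. Hence $\lambda f=\hat T^*(h-\hat Tf)$ with $\lambda>0$ forces $f$ into that span without any orthogonal decomposition.
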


\begin{proof}
Taking the functional derivative of $S_0[f]$ and setting it to zero:
\begin{align}
\frac{\delta S_0[f]}{\delta f(x)} &= -\langle h - Tf, T\delta_x \rangle_{L^2(P_Z)} + \lambda \langle f, \delta_x \rangle_{\mathcal{H}}\\
&= -(T^*(h - Tf))(x) + \lambda f(x)\,,
\end{align}
where we've used the definition of the adjoint operator $T^*$. Setting this to zero:
\begin{align}
(T^*(h - Tf))(x) = \lambda f(x)
\end{align}

By the standard representer theorem argument decomposing the function in basis in the RKHS and orthogonal, minimizing such objective easily gives us the form of the solution, the solution has the form $f_0(x) = \sum_{i=1}^n \alpha_i^{(0)} K(x, x_i)$. Substituting this and evaluating at a finite collection of data points $\{x_i\}_{i=1}^n$ gives the linear system:
\begin{align}
(\widetilde{K} + \lambda K)\alpha^{(0)} = h\,,
\end{align}
where $h_i = (T^*h)(x_i) = \mathbb{E}[Y\,K(X,x_i)|Z]$.
\end{proof}

\paragraph{First-Order Correction}

For the first-order correction, we need to find $f_1$ by expanding $S[f]$ around $f_0$:
\begin{equation}
S[f_0 + \gamma f_1] =
S_0[f_0] + \gamma \left( \frac{\delta S_0[f]}{\delta f}\bigg|_{f=f_0} \cdot f_1 + S_1[f]\bigg|_{f=f_0} \right) + O(\gamma^2)\,.
\end{equation}
Setting the variational derivative with respect to $f_1$ to zero, using
\begin{align}
\frac{\delta S_0[f]}{\delta f}\bigg|_{f=f_0} = T^*(h - Tf_0) + \lambda f_0\,.
\end{align}
Using operator notation, the equation for $f_1$ becomes:
\begin{align}
(T^*T + \lambda I)f_1 = -\frac{\delta S_1[f]}{\delta f}\bigg|_{f=f_0}\,.
\end{align}
Then for the second term $S_1[f] = \frac{2}{3}\mathbb{E}_Z[\mathbb{E}[f(X)^3|Z]]$:
\begin{align}
\frac{\delta S_1[f]}{\delta f(x)} &= \frac{\delta}{\delta f(x)}\left(\frac{2}{3}\mathbb{E}_Z[\mathbb{E}[f(X)^3|Z]]\right)\\
&= \frac{2}{3}\mathbb{E}_Z\left[\frac{\delta}{\delta f(x)}\mathbb{E}[f(X)^3|Z]\right]\,.
\end{align}
For any function $g(X)$, the functional derivative with respect to $f(x)$ is:
\begin{align}
\frac{\delta g(X)}{\delta f(x)} = \frac{\partial g(X)}{\partial f(X)}\frac{\delta f(X)}{\delta f(x)} = \frac{\partial g(X)}{\partial f(X)}\delta(X-x)\,.
\end{align}
Therefore:
\begin{align}
\frac{\delta}{\delta f(x)}\mathbb{E}[f(X)^3|Z] &= \mathbb{E}\left[\frac{\delta f(X)^3}{\delta f(x)}\bigg|Z\right]\\
&= \mathbb{E}\left[3f(X)^2\delta(X-x)\bigg|Z\right]\,.
\end{align}
The presence of the delta function $\delta(X-x)$ inside the expectation requires careful treatment. For a continuous random variable $X$ with density $p(X|Z)$:
\begin{align}
\mathbb{E}[\delta(X-x)g(X)|Z] &= \int \delta(y-x)g(y)p(y|Z)dy\\
&= g(x)p(x|Z)\,.
\end{align}
Thus:
\begin{align}
\mathbb{E}[3f(X)^2\delta(X-x)|Z] = 3f(x)^2p(x|Z)\,.
\end{align}
Substituting back:
\begin{align}
\frac{\delta S_1[f]}{\delta f(x)} &= \frac{2}{3}\mathbb{E}_Z[3f(x)^2p(x|Z)]\\
&= 2f(x)^2\mathbb{E}_Z[p(x|Z)]\\
&= 2f(x)^2p(x)\,,
\end{align}
where $p(x)$ is the marginal density of $X$.

However, in our RKHS setting, we work with functions rather than explicitly with densities. In particular, when evaluating at our data points, we can use the connection to the adjoint operator and kernel functions.

Evaluating the equation at a data point $x_i$:
\begin{align}
(T^*T + \lambda I)f_1(x_i) &= -f_0(x_i)^2p(x_i)\,.
\end{align}
Substituting the representer form of $f_0$:
\begin{equation}
f_0(x_i)^2 = \left(\sum_{j=1}^n \alpha_j^{(0)} K(x_i, x_j)\right)^2
= \sum_{j=1}^n\sum_{k=1}^n \alpha_j^{(0)}\alpha_k^{(0)} K(x_i, x_j)K(x_i, x_k)\,.
\end{equation}
When we apply this in the RKHS context, the term $p(x_i)$ is effectively handled by kernel methods. In particular, the right-hand side of our equation becomes:
\begin{equation}
r_{1,i} =
-T^*\left[\mathbb{E}\left[\sum_{j=1}^n\sum_{k=1}^n \alpha_j^{(0)}\alpha_k^{(0)} K(X, x_j)K(X, x_k) \,\bigg|\, Z\right]\right](x_i)\,,
\end{equation}
where $Z=\cdot$ is selected to be the appropriate values of $Z$ for which $T^*$ reproduces the correct Hilbert space function.
By the properties of the adjoint operator $T^*$, this is equivalent to averaging over the function conditioned on all possible instrumental $Z$s, using properties of expectations, we can rearrange:
\begin{align}
& r_{1,i}
= -\sum_{j=1}^n\sum_{k=1}^n \alpha_j^{(0)}\alpha_k^{(0)} \times\\
&\mathbb{E}_Z\left[\mathbb{E}\left[K(X, x_i)K(X, x_j)K(X, x_k) \,\bigg|\, Z\right]\right]\,.
\end{align}
Due to the symmetries in the three-point interaction and the specific form of our cubic term, an additional combinatorial factor of 3 appears, giving us:
\begin{align}
&r_{1,i} = -3\times\\
&\sum_{j=1}^n\sum_{k=1}^n \alpha_j^{(0)}\alpha_k^{(0)}\mathbb{E}_Z\left[\mathbb{E}\left[K(X, x_i)K(X, x_j)K(X, x_k) \,\bigg|\, Z\right]\right] \nonumber\,.
\end{align}
Finally, defining the non-independent three-point kernel tensor:
\begin{align}
K_{ijk} = \mathbb{E}_Z\left[\mathbb{E}\left[K(X, x_i)K(X, x_j)K(X, x_k) \,\bigg|\, Z\right]\right]\,.
\end{align}
We arrive at:
\begin{align}
r_{1,i} &= -3\sum_{j=1}^n\sum_{k=1}^n \alpha_j^{(0)}\alpha_k^{(0)}K_{ijk}\,.
\end{align}
This matches exactly the computation in our algorithm.

\begin{theorem}[First-Order Representer Theorem]
The first-order correction $f_1$ has the representer form:
\begin{align}
f_1(x) = \sum_{i=1}^n \alpha_i^{(1)} K(x, x_i)\,,
\end{align}
where $\alpha^{(1)}$ is the solution to the linear system:
\begin{align}
(\widetilde{K} + \lambda K)\alpha^{(1)} = r_1\,,
\end{align}
with $r_{1,i} = -3\sum_{j=1}^n\sum_{k=1}^n \alpha_j^{(0)}\alpha_k^{(0)}K_{ijk}$.
\end{theorem}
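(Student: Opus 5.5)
The plan is to run the standard representer argument on the linear equation that defines $f_1$, instead of on a fresh minimization. From the expansion of $S[f_0+\gamma f_1]$ above, $f_1$ is characterized as the minimizer of the quadratic functional
\begin{equation}
Q[f_1] = \|T f_1\|^2_{L^2(P_Z)} + \lambda\|f_1\|^2_{\mathcal{H}} + 2\,\Big\langle f_1,\ \tfrac{1}{2}\tfrac{\delta S_1}{\delta f}\Big|_{f_0}\Big\rangle_{\mathcal{H}},
\end{equation}
whose stationarity condition is $(T^*T+\lambda I)f_1 = -\frac{\delta S_1}{\delta f}\big|_{f_0}$.

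The first step is to write the source term as an element of $\mathcal{H}$. Using the proposition on the adjoint, $T^*g=\E[g(Z)K(\cdot,X)]$, the linear term $\E_Z\E[f_0(X)^2 f_1(X)\mid Z]$ becomes $\langle f_1, \E[f_0(X)^2 K(\cdot,X)]\rangle_{\mathcal{H}}$ by the reproducing property. Substituting $f_0=\sum_j\alpha^{(0)}_jK(\cdot,x_j)$ and replacing the population expectation by its empirical version over the sample (as in the implementation) makes this the function $\frac1n\sum_r w_r f_0(x_r)^2K(\cdot,x_r)$. That function lies in $V=\mathrm{span}\{K(\cdot,x_i)\}$. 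Likewise $T^*T$, with $T$ evaluated empirically through $b_i(z)$, maps $\mathcal{H}$ into $V$ and annihilates nothing relevant in $V^\perp$ beyond what $\lambda I$ controls.

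The second step is the orthogonal decomposition $f_1=f_V+f_\perp$ with $f_\perp\perp V$. Then $f_\perp(x_i)=\langle f_\perp,K(\cdot,x_i)\rangle=0$, so the empirical $Tf_\perp$ and the source pairing both vanish. Hence $Q[f_V+f_\perp]=Q[f_V]+\lambda\|f_\perp\|^2$, and since $\lambda>0$ the minimizer has $f_\perp=0$. This gives the representer form $f_1=\sum_i\alpha^{(1)}_iK(\cdot,x_i)$.

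The third step substitutes this form into the stationarity condition and pairs it with each $K(\cdot,x_i)$. The quadratic parts give $\alpha^{(1)\top}(\widetilde K+\lambda K)$ exactly as in the zeroth-order representer theorem. The source pairing gives $\sum_{j,k}\alpha^{(0)}_j\alpha^{(0)}_kK_{ijk}$, with the factor $3$ coming from differentiating the symmetric cubic form $\frac{2\gamma}{3}\sum\alpha_i\alpha_j\alpha_k K_{ijk}$, as in the derivation of $r_{1,i}$ above. This yields $(\widetilde K+\lambda K)\alpha^{(1)}=r_1$.

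The main obstacle is the first step. The pointwise functional derivative $2f_0(x)^2p(x)$ is not an RKHS element, so it must be reinterpreted through the Riesz representer in $\mathcal{H}$ and the empirical measure. Otherwise the source does not lie in $V$, and the orthogonality argument fails. I would make this precise by working throughout with the empirical objective, which is a function of $(f(x_1),\dots,f(x_n))$ and $\|f\|_{\mathcal{H}}$ only. The argument reduces to the classical representer theorem, with $f_\perp$ entering only through $\lambda\|f_\perp\|^2$.
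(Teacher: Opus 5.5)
Your representer-form argument is sound, and it is more careful than the paper's. The paper argues by invariance: $T^*T+\lambda I$ preserves $V=\mathrm{span}\{K(\cdot,x_i)\}$ and $r_1\in V$, hence $f_1\in V$. For the population operator neither premise holds, since $T^*TK(\cdot,x_i)=\E[b_i(Z)K(\cdot,X)]$ and $\E[f_0(X)^2K(\cdot,X)]$ are not in $V$. Both become true only once $T$ and the triple moment are replaced by empirical versions supported on the sample. You make that reduction explicit. Your decomposition $f_1=f_V+f_\perp$ with $Q[f_V+f_\perp]=Q[f_V]+\lambda\|f_\perp\|^2_{\mathcal H}$ then gives the representer form and uniqueness directly, without needing $(T^*T+\lambda I)|_V$ to be onto $V$.

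The gap is the constant in $r_1$. In step 3 you attribute the $3$ to differentiating $\frac{2\gamma}{3}\sum\alpha_i\alpha_j\alpha_kK_{ijk}$, but that derivative is $2\gamma\sum_{j,k}K_{ijk}\alpha_j\alpha_k$. The power-rule $3$ is absorbed by the prefactor $\frac23$, and the remaining $2$ cancels against the $2$ in $\partial_{\alpha_i}\bigl[\alpha^\top(\widetilde K+\lambda K)\alpha-2\alpha^\top h\bigr]$.

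Your stationarity condition for $Q$ has a matching slip. The gradient of $Q$ is $2(T^*T+\lambda I)f_1+\frac{\delta S_1}{\delta f}\big|_{f_0}$, so stationarity reads $(T^*T+\lambda I)f_1=-\frac12\frac{\delta S_1}{\delta f}\big|_{f_0}$. The $\mathcal H$-gradient of $S_1$ at $f_0$ is $2\,\E[f_0(X)^2K(\cdot,X)]$, so pairing with $K(\cdot,x_i)$ gives
\begin{equation*}
(\widetilde K+\lambda K)\alpha^{(1)}=-\sum_{j,k}\alpha^{(0)}_j\alpha^{(0)}_kK_{ijk},
\end{equation*}
with coefficient $1$. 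This agrees with the paper's own variation of $S_{\text{pert}}$ in Section~\ref{sec:perturbative_approach}, which has coefficient $\gamma$ inside $2\,\E[\cdot]$. It also agrees with the expression for $\mathrm{RHS}_1[i]$ used in the spectral computation of Appendix~\ref{appendix:spectral_explanation_of_triple_moment_int}. The paper's proof does not derive the $3$ either; it only asserts that ``an additional combinatorial factor of 3 appears.''

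So under the stated normalization $S_1[f]=\frac23\E_Z\E[f(X)^3\mid Z]$, no correct bookkeeping yields the statement's $-3$. To prove the theorem as written, take the cubic term to be $2\gamma\sum\alpha_i\alpha_j\alpha_kK_{ijk}$, or equivalently keep $\frac{2\gamma}{3}$ but expand in $\gamma/3$. Otherwise the correct conclusion has coefficient $1$. The representer form is unaffected either way, since the constant only rescales a source that already lies in $V$.
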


\begin{proof}
Since the operator $(T^*T + \lambda I)$ maps the RKHS to itself while preserving the finite-dimensional subspace spanned by the data points, and the right-hand side $r_1$ lies in this subspace, the solution $f_1$ must also have the representer form.
\end{proof}
Essentially, the representer theorem does not incur any difficulty here either since the addition of anything orthogonal to the Hilbert space basis still projects to $0$, hence not improving the minization of the entire functional. 

\paragraph{Second-Order and Higher Corrections}

Following the same procedure, we can derive the second-order correction:
\begin{multline}
S[f_0 + \gamma f_1 + \gamma^2 f_2] = S_0[f_0] + \mathcal{O}(\gamma) +\\ \gamma^2\left(\frac{\delta S_0[f]}{\delta f}\bigg|_{f=f_0} \cdot f_2 + \frac{1}{2}\frac{\delta^2 S_0[f]}{\delta f^2}\bigg|_{f=f_0}(f_1, f_1) + \frac{\delta S_1[f]}{\delta f}\bigg|_{f=f_0} \cdot f_1\right) + O(\gamma^3)\,.
\end{multline}
Setting the variational derivative with respect to $f_2$ to zero, we have the constraint.
\begin{theorem}[Second-Order Representer Theorem]
The second-order correction $f_2$ also has the representer form:
\begin{align}
f_2(x) = \sum_{i=1}^n \alpha_i^{(2)} K(x, x_i)\,,
\end{align}
where $\alpha^{(2)}$ is the solution to the linear system:
\begin{align}
(\widetilde{K} + \lambda K)\alpha^{(2)} = r_2\,,
\end{align}
with $r_{2,i} = -3\sum_{j=1}^n\sum_{k=1}^n (\alpha_j^{(0)}\alpha_k^{(1)} + \alpha_j^{(1)}\alpha_k^{(0)})K_{ijk}$.
\end{theorem}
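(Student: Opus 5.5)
We follow the same route as the First-Order Representer Theorem: isolate the $\gamma^2$ coefficient of the stationarity condition and show it is a ridge-type linear equation for $f_2$ whose source is built only from $f_0,f_1$. Write the full stationarity condition as
\begin{equation}
(T^*T+\lambda I)f - T^*h + \frac{\delta S_1[f]}{\delta f}=0,
\end{equation}
insert $f=f_0+\gamma f_1+\gamma^2 f_2+O(\gamma^3)$, and match powers of $\gamma$. Because $S_1$ appears with a factor $\gamma$, the $\gamma^2$ equation involves the first-order Taylor term of $\delta S_1/\delta f$ in the direction $\gamma f_1$. This gives
\begin{equation}
(T^*T+\lambda I)f_2 = -\left.\frac{\delta^2 S_1[f]}{\delta f^2}\right|_{f=f_0}(f_1).
\end{equation}
This is exactly the cross term $\frac{\delta S_1}{\delta f}\big|_{f_0}\cdot f_1$ in the displayed expansion of $S[f_0+\gamma f_1+\gamma^2 f_2]$, once that expansion is differentiated with respect to $f_2$. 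The $\frac12\delta^2S_0(f_1,f_1)$ term does not depend on $f_2$, and the linear part of $S_0$ contributes $(T^*T+\lambda I)f_2$ after we use the zeroth- and first-order equations to cancel the lower-order pieces.

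Next we compute the source term. The first-order computation gave $\delta S_1/\delta f(x)\propto f(x)^2$, weighted by the density. Differentiating $f^2$ along the direction $f_1$ gives $2f_0f_1$, which is the same as polarizing the quadratic $f_0^2$ into $f_0f_1+f_1f_0$. We then repeat the first-order steps verbatim with $f_0(X)^2$ replaced by $f_0(X)f_1(X)+f_1(X)f_0(X)$:
\begin{enumerate}
\item Substitute the representer forms $f_0=\sum_j\alpha^{(0)}_jK(\cdot,x_j)$, which holds by the Zeroth-Order Representer Theorem, and $f_1=\sum_k\alpha^{(1)}_kK(\cdot,x_k)$, which holds by the First-Order Representer Theorem.
\item Route the density weight through $T^*$ and $\E_Z[\E[\cdot\mid Z]]$, exactly as in the derivation of $r_{1,i}$.
\end{enumerate}
Using the same symmetrization convention that produced the factor $3$ at first order, the right-hand side evaluated at $x_i$ becomes
\begin{equation}
r_{2,i} = -3\sum_{j,k}\left(\alpha^{(0)}_j\alpha^{(1)}_k+\alpha^{(1)}_j\alpha^{(0)}_k\right)K_{ijk},
\end{equation}
with $K_{ijk}=\E_Z\big[\E[K(X,x_i)K(X,x_j)K(X,x_k)\mid Z]\big]$.

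Finally, the representer form of $f_2$ follows by the argument used at first order. The source $r_2$ is generated by kernel sections at the data points, and $(T^*T+\lambda I)$ preserves $\mathrm{span}\{K(\cdot,x_i)\}$ in the empirical/plug-in sense used throughout. Any component of $f_2$ orthogonal to this span would therefore leave the objective unchanged, or increase it through $\lambda\|f\|_{\mathcal H}^2$, so it vanishes. Substituting $f_2=\sum_i\alpha^{(2)}_iK(\cdot,x_i)$ and evaluating at the data points turns $T^*T$ into $\widetilde K$ and $\lambda I$ into $\lambda K$, which gives $(\widetilde K+\lambda K)\alpha^{(2)}=r_2$.

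The main obstacle is making the $\gamma^2$ bookkeeping in the energy expansion agree with the stationarity equation, and in particular keeping the combinatorial factor consistent with the factor $3$ at first order. The most direct route is to work from the Euler--Lagrange equation rather than the energy. Concretely, we expand $3\sum_{j,k}K_{ijk}\alpha_j\alpha_k$ with $\alpha=\alpha^{(0)}+\gamma\alpha^{(1)}+\dots$ and read off the $\gamma^1$ coefficient of the quadratic. Multiplied by the overall $\gamma$, this gives $r_2$ directly and matches the general order-$n$ recursion stated in the main text for $n=2$.
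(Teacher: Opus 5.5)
Your argument is correct at the paper's level of rigor, and it is essentially the paper's route. You repeat the first-order steps with $f_0^2$ polarized into $f_0f_1+f_1f_0$, pass to $K_{ijk}$ with the factor-$3$ convention inherited from first order, and finish with the same span-preservation argument. The $3$ really is only a convention: halving the full gradient with the cubic term $\frac{2\gamma}{3}\sum_{i,j,k}\alpha_i\alpha_j\alpha_kK_{ijk}$ gives $\gamma\sum_{j,k}K_{ijk}\alpha_j\alpha_k$, not $3\gamma\sum_{j,k}K_{ijk}\alpha_j\alpha_k$. So inheriting it from first order is all one can do, and all the paper does. Matching the $\gamma^2$ coefficient of the stationarity equation is also exactly the coefficient matching behind the main text's order-$n$ recursion. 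One small slip: your displayed stationarity condition drops the factor $\gamma$ in front of $\delta S_1/\delta f$, although the next sentence uses it correctly.

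One remark should be deleted. Differentiating the displayed $\gamma^2$ energy expansion with respect to $f_2$ does \emph{not} yield the cross term $\frac{\delta S_1}{\delta f}\big|_{f_0}\cdot f_1$, nor $(T^*T+\lambda I)f_2$. The only $f_2$-dependent term in that display is $\frac{\delta S_0}{\delta f}\big|_{f_0}\cdot f_2$. It is linear in $f_2$, and its coefficient vanishes by the zeroth-order equation. So that display, which is how the paper phrases its proof, does not determine $f_2$ at all. In the energy picture, the $\gamma^3$ coefficient of $f_2$ also vanishes, by the first-order equation. $f_2$ is first pinned down at order $\gamma^4$, by minimizing $\frac12\frac{\delta^2 S_0}{\delta f^2}(f_2,f_2)+\frac{\delta^2 S_1}{\delta f^2}\big|_{f_0}(f_1,f_2)$. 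This gives the same linear equation as your Euler--Lagrange expansion. So working from the Euler--Lagrange equation is the right choice: it is what actually makes the argument go through, not just an equivalent rephrasing of the paper's display.
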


\paragraph{General Result for All Orders}
We can generalize these results to all orders in the perturbative expansion:
\begin{theorem}[Series Expansion Representer Theorem]
For the NPIV problem with a cubic interaction term, the solution can be expressed as a perturbative series:
\begin{align}
f^*(x) = \sum_{k=0}^{\infty} \gamma^k f_k(x) = \sum_{k=0}^{\infty} \gamma^k \sum_{i=1}^n \alpha_i^{(k)} K(x, x_i)\,,
\end{align}
where each $\alpha^{(k)}$ is determined by solving a linear system that depends on $\{\alpha^{(j)}\}_{j=0}^{k-1}$.
\end{theorem}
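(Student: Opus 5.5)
The proof goes by induction on the order $k$, with the zeroth-, first- and second-order representer theorems as the base cases and the template. First, substitute the ansatz $f=\sum_{k\ge0}\gamma^k f_k$ into the stationarity condition of $S[f]=S_0[f]+\gamma S_1[f]$:
\[
(T^*T+\lambda I)f - T^*h + \gamma\,\frac{\delta S_1[f]}{\delta f}=0 .
\]
The operator part is linear. The cubic term contributes a quadratic expression in $f$, namely the functional derivative of $\tfrac23\E_Z[\E[f(X)^3|Z]]$, whose kernelized form is $3\sum_{j,k}K_{ijk}\alpha_j\alpha_k$ once $f$ is in representer form. Matching the coefficient of $\gamma^k$ (the coefficient-matching rule already stated in the main text) gives
\[
(T^*T+\lambda I)f_k = -\,Q\!\left(\sum_{a+b=k-1} f_a\otimes f_b\right),
\]
where $Q$ is the bilinear map induced by the cubic interaction. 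The key structural observation is that the right-hand side involves only $f_0,\dots,f_{k-1}$, because the cubic term carries an explicit factor $\gamma$. This is the triangular structure that makes the recursion solvable order by order.

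For the induction, assume $f_a=\sum_i\alpha_i^{(a)}K(\cdot,x_i)$ for all $a<k$. Then each product $f_af_b$ appearing in $Q$ is a combination of $K(\cdot,x_j)K(\cdot,x_k)$. Pairing against $K(\cdot,x_i)$ and averaging over $Z$ (the same manipulation used for $r_{1,i}$ in the first-order argument) turns the right-hand side evaluated on the data into
\[
r_{k,i}=-3\sum_{j,k'}K_{ijk'}\sum_{a+b=k-1}\alpha^{(a)}_j\alpha^{(b)}_{k'} .
\]
To see that $f_k$ itself has representer form, I would decompose $f_k=f_k^{\parallel}+f_k^{\perp}$, with $f_k^{\parallel}\in\mathrm{span}\{K(\cdot,x_i)\}$. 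The order-$k$ part of the objective, holding the lower orders fixed, is a strictly convex quadratic in $f_k$: a $\lambda\|f_k\|^2_{\mathcal H}$ term plus terms that only see $f_k$ through its evaluations and empirical conditional expectations at the sample points. The component $f_k^\perp$ then only increases the RKHS norm without changing the data-dependent terms, so $f_k^\perp=0$. This is exactly the argument of the zeroth-order theorem, applied to a quadratic with a modified linear term.

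Finally, inserting $f_k=\sum_i\alpha^{(k)}_iK(\cdot,x_i)$ into the stationarity equation and taking inner products with $K(\cdot,x_i)$ gives $(\widetilde K+\lambda K)\alpha^{(k)}=r_k$. Since $\widetilde K\succeq0$ and $K\succ0$, the matrix $\widetilde K+\lambda K$ is invertible, so $\alpha^{(k)}$ is uniquely determined by $\{\alpha^{(a)}\}_{a<k}$. This closes the induction. The series identity $f^*=\sum_k\gamma^kf_k$ should be read as a formal power series, or as valid within its radius of convergence, consistent with the paper's later renormalization and resurgence discussion.

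The main obstacle is justifying that the empirical, finite-sample version of the order-$k$ objective really depends on $f_k$ only through its values at the sample points, so that the orthogonality argument applies. In the population version the conditional expectation $\E[f(X)|Z]$ sees $f$ everywhere. I would handle this as the paper does: replace $T$ and the triple moment by their empirical estimators supported on $\{x_r\}$, such as the Nadaraya–Watson weights $w_r$ or the conditional mean embedding. After that replacement, all data terms are functions of $\langle f_k,K(\cdot,x_r)\rangle_{\mathcal H}$ only, and the standard representer argument goes through verbatim at every order.
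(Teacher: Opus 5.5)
Your proposal is correct and follows essentially the paper's route: coefficient matching in $\gamma$ gives the triangular sequence $(T^*T+\lambda I)f_k=r_k$, with $r_k$ depending only on $f_0,\dots,f_{k-1}$, and each $f_k$ is put in representer form by the same orthogonal-decomposition (span-preservation) argument the paper uses at orders zero, one and two, yielding $(\widetilde K+\lambda K)\alpha^{(k)}=r_k$. Your point that the span argument only holds once $T$ and the triple moment are replaced by their empirical estimators is a genuine tightening, since the paper asserts span-preservation for the population operator; the factor $3$ you inherit from the paper does not affect the statement, and a consistent differentiation of $\tfrac{2\gamma}{3}\sum_{i,j,k}\alpha_i\alpha_j\alpha_kK_{ijk}$ actually gives $1$.
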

This is exactly what we have had in the algorithm.

%%%%%%%%%%%%%%%%%%%%%%%%%%%%%%%%%%%%%
\section{Interaction for rotationally invariant kernels}\label{appendix:new_int_term_breaking_rotationally_invariance}
The original triple moment interaction we added was:
\begin{equation}
K_{ijk} = \int K(x, x_i) K(x, x_j) K(x, x_k) f(x|Z)  dx\,.
\end{equation}

For rotationally invariant kernels $K(x,y) = k(\|x-y\|)$ with Mercer decomposition:
\begin{equation}
K(x,y) = \sum_{\alpha} \lambda_\alpha \phi_\alpha(x) \phi_\alpha(y)\,,
\end{equation}
where $\phi_\alpha(x) = \phi_{l,m}(x) = R_l(\|x\|) Y_l^m(x/\|x\|)$ with $Y_l^m(x/\|x\|)$ the usual spherical harmonics. 

As we have seen, these triple moments contribute to the final integral as the Clebsch-Gordan or Wigner 3-j symbols which has a strict selection rule for the labels, namely it is only non-zero if 
\begin{equation}
    m_{1} + m_2 + m_3 = 0, \quad l_1, l_2, l_3 \text{ even and form triangle}\,.
\end{equation}
This is rather sparse and zeros out most of the contributions. The most restrictive conditions are actually the delta function in $m$ quantum number and the fact that the $l$s satisfy the triangle inequality. We would like to explicitly overcome this, we simply introduce a factor that breaks rotational invariance:
\begin{equation}
K_{ijk}^{\text{aniso}} = \int K(x, x_i) K(x, x_j) K(x, x_k) [f(x|Z)]^3 W_A(x) dx\,,
\end{equation}

where the anisotropic weight is:
\begin{equation}
W_A(x) = \exp\left(-\frac{1}{2} x^T A^{-1} x\right)\,,
\end{equation}

with $A = \text{diag}(a_1, a_2, \ldots, a_n)$ and $a_i > 0$ all distinct.
With the anisotropic weight $W_A(x) = \exp(-\frac{1}{2} x^T A^{-1} x)$:
\begin{equation}
\mathcal{T}_{\alpha\beta\gamma}^{\text{aniso}} =
\int \phi_\alpha(x) \phi_\beta(x) \phi_\gamma(x) [f(x|Z)]^3 \exp\left(-\frac{1}{2} x^T A^{-1} x\right) dx\,.
\end{equation}
The anisotropic factor **breaks rotational symmetry**, making the integral generically non-zero even when the standard Clebsch-Gordan coefficient vanishes.

For $\phi_\alpha(x) = R_l(\|x\|) Y_l^m(x/\|x\|)$:
\begin{equation}
\mathcal{T}_{\alpha\beta\gamma}^{\text{aniso}} = \int_{\mathbb{R}^n} R_{l_1}(\|x\|) Y_{l_1}^{m_1}(x/\|x\|) R_{l_2}(\|x\|) Y_{l_2}^{m_2}(x/\|x\|) R_{l_3}(\|x\|) Y_{l_3}^{m_3}(x/\|x\|) [f(\|x\||Z)]^3 \exp\left(-\frac{1}{2} x^T A^{-1} x\right) dx\,.
\end{equation}

In spherical coordinates $x = r\hat{\omega}$:
\begin{equation}
\mathcal{T}_{\alpha\beta\gamma}^{\text{aniso}} = \int_0^{\infty} \int_{\mathbb{S}^{n-1}} R_{l_1}(r) R_{l_2}(r) R_{l_3}(r) [f(r|Z)]^3 
 Y_{l_1}^{m_1}(\hat{\omega}) Y_{l_2}^{m_2}(\hat{\omega}) Y_{l_3}^{m_3}(\hat{\omega})
\exp\left(-\frac{1}{2} r^2 \hat{\omega}^T A^{-1} \hat{\omega}\right) r^{n-1} dr d\hat{\omega}\,.
\end{equation}
The key angular integral is:
\begin{equation}\label{eq:key_angular}
\mathcal{A}_{l_1 m_1, l_2 m_2, l_3 m_3} = \int_{\mathbb{S}^{n-1}} Y_{l_1}^{m_1}(\hat{\omega}) Y_{l_2}^{m_2}(\hat{\omega}) Y_{l_3}^{m_3}(\hat{\omega}) 
\exp\left(-\frac{1}{2} r^2 \hat{\omega}^T A^{-1} \hat{\omega}\right) d\hat{\omega}\,.
\end{equation}
If $A^{-1} = \alpha I$ (isotropic), then:
\begin{equation}
\exp\left(-\frac{1}{2} r^2 \hat{\omega}^T A^{-1} \hat{\omega}\right) = \exp\left(-\frac{1}{2} \alpha r^2\right) = \text{const.}\,.
\end{equation}

This gives:
\begin{equation}
\mathcal{A}_{l_1 m_1, l_2 m_2, l_3 m_3} = \exp\left(-\frac{1}{2} \alpha r^2\right) \int_{\mathbb{S}^{n-1}} Y_{l_1}^{m_1}(\hat{\omega}) Y_{l_2}^{m_2}(\hat{\omega}) Y_{l_3}^{m_3}(\hat{\omega}) d\hat{\omega}\,.
\end{equation}

The integral is just the Clebsch-Gordan coefficient, subject to selection rules.
When $A^{-1} = \text{diag}(a_1^{-1}, a_2^{-1}, \ldots, a_n^{-1})$ with distinct values:
\begin{equation}
\exp\left(-\frac{1}{2} r^2 \hat{\omega}^T A^{-1} \hat{\omega}\right) = \exp\left(-\frac{1}{2} r^2 \sum_{j=1}^n a_j^{-1} \omega_j^2\right)\,,
\end{equation}
where $\omega_j$ are the components of $\hat{\omega}$ in Cartesian coordinates. We consider the key angular integral \eqref{eq:key_angular}. 

The integral is subject to a strict parity selection rule. The spherical harmonic has the property $Y_l^m(-\hat{\omega}) = (-1)^l Y_l^m(\hat{\omega})$. The exponential term is an even function with respect to the transformation $\hat{\omega} \to -\hat{\omega}$. The parity of the integrand is therefore determined solely by the product of the three spherical harmonics:
\[
\text{Integrand}(-\hat{\omega}) = (-1)^{l_1+l_2+l_3} \times \text{Integrand}(\hat{\omega})
\]
As the integral is over a symmetric domain, the integral of any odd function is zero.
\textbf{Conclusion:} The integral $I$ is zero unless the sum $l_1+l_2+l_3$ is an even integer.

Assuming $l_1+l_2+l_3$ is even, the integral can be computed by expanding the exponential term in a power series:
\begin{equation}
    \exp\left(-\frac{1}{2}\rho^2 \hat{\omega}^T A \hat{\omega}\right) = \sum_{k=0}^{\infty} \frac{1}{k!} \left(-\frac{\rho^2}{2}\right)^k (\hat{\omega}^T A \hat{\omega})^k\,.
\end{equation}
Substituting this into the integral and interchanging summation and integration yields a series representation for $I$:
\begin{equation}
    I = \sum_{k=0}^{\infty} \frac{(-\rho^2/2)^k}{k!} \mathcal{M}_k\,,
\end{equation}
where $\mathcal{M}_k$ are the moment integrals:
\begin{equation}
    \mathcal{M}_k = \int_{\mathbb{S}^{n-1}} Y_{l_1}^{m_1}(\hat{\omega}) Y_{l_2}^{m_2}(\hat{\omega}) Y_{l_3}^{m_3}(\hat{\omega}) \left(\sum_{i=1}^n \lambda_i \omega_i^2\right)^k d\hat{\omega}\,.
\end{equation}
These moments can be computed by expressing the integrand as a homogeneous polynomial in the Cartesian components of $\hat{\omega}$ and integrating term-by-term.

\paragraph{Selection Rules in Three Dimensions}
The presence of the anisotropic matrix $A$ breaks the continuous rotational symmetry of the sphere, which relaxes the strict selection rules associated with the conservation of angular momentum. To derive the new rules, we expand the anisotropic term $g(\hat{\omega}) = \exp(-\frac{1}{2}\rho^2 \hat{\omega}^T A \hat{\omega})$ in spherical harmonics:
\begin{equation}
    g(\hat{\omega}) = \sum_{L=0}^{\infty} \sum_{M=-L}^{L} C_{L,M} Y_L^M(\hat{\omega})\,.
\end{equation}
The integral $I$ can then be written as a sum over integrals of four harmonics:
\begin{equation}
    I = \sum_{L, M} C_{L,M} \int_{\mathbb{S}^2} Y_{l_1}^{m_1}(\hat{\omega}) Y_{l_2}^{m_2}(\hat{\omega}) Y_{l_3}^{m_3}(\hat{\omega}) Y_L^M(\hat{\omega}) d\Omega\,.
\end{equation}
The selection rules for $I$ are determined by the properties of the coefficients $C_{L,M}$ and the four-harmonic integrals.

The selection rules for the magnetic quantum numbers $m_i$ are determined by the azimuthal symmetry of $g(\hat{\omega})$. In 3D spherical coordinates, the argument of the exponential is:
\begin{align*}
    \hat{\omega}^T A \hat{\omega} &= \lambda_1 \sin^2\theta\cos^2\phi + \lambda_2 \sin^2\theta\sin^2\phi + \lambda_3 \cos^2\theta \\
    &= \left( \frac{\lambda_1+\lambda_2}{2}\sin^2\theta + \lambda_3\cos^2\theta \right) \\+ &\left( \frac{\lambda_1-\lambda_2}{2}\sin^2\theta \right) \cos(2\phi)\,.
\end{align*}
The dependence on the angle $\phi$ is entirely contained within a $\cos(2\phi)$ term. Any function of $\cos(2\phi)$ has a Fourier series containing only frequencies that are multiples of 2 (i.e., terms like $\cos(2k\phi)$). The expansion coefficients $C_{L,M} = \int g(\hat{\omega}) Y_L^{M*}(\hat{\omega}) d\Omega$ involve an integral over $\phi$ of the form:
\[
\int_0^{2\pi} F(\cos(2\phi)) e^{-iM\phi} d\phi\,.
\]
This integral is non-zero only if $M$ matches a frequency present in the function of $\cos(2\phi)$, which means \textbf{$M$ must be an even integer}.
The integral of four harmonics is non-zero only if the sum of the magnetic quantum numbers is zero: $m_1+m_2+m_3+M=0$. Since only terms with even $M$ contribute to the sum for $I$, we arrive at the relaxed selection rule:
\begin{equation}
m_1+m_2+m_3 = -M \implies  m_1+m_2+m_3 = \text{an even integer.}
\end{equation}

The triangle inequality for $(l_1, l_2, l_3)$ is lifted because the anisotropic field provides access to multiple angular momentum channels $L$. The integral of four harmonics is non-zero if $(l_1,l_2,l_3,L)$ can be coupled to a total angular momentum of zero. Since $g(\hat{\omega})$ is an even function, its expansion only contains even values of $L$.

\paragraph{Counterexample:} Consider the triplet $(l_1,l_2,l_3)=(1,5,1)$. This violates the standard triangle inequality $|l_1-l_2| \le l_3 \le l_1+l_2$ (since $1 \notin [4,6]$), so its Gaunt integral is zero. For the anisotropic integral, we check if a coupling pathway exists through an even $L$.
\begin{enumerate}
    \item Couple $(l_1,l_2)=(1,5) \implies$ intermediate angular momentum $l_{12} \in \{4,5,6\}$.
    \item We now seek an even $L$ that allows coupling $(l_3,L)=(1,L)$ to one of these $l_{12}$ values. Let's target $l_{12}=4$.
    \item The coupling rule requires $|l_3-L| \le l_{12} \le l_3+L$, which for our values becomes $|1-L| \le 4 \le 1+L$.
    \item The inequality $4 \le 1+L$ implies $L \ge 3$. The inequality $|1-L| \le 4$ implies $-3 \le L \le 5$.
\end{enumerate}
The conditions require $L$ to be in the range $[3, 5]$. We can choose the even value $L=4$. The expansion of $g(\hat{\omega})$ contains a non-zero $C_{4,M}$ term. Therefore, a coupling pathway exists via the $L=4$ channel, and the integral can be non-zero. This demonstrates that the triangle inequality on $(l_1, l_2, l_3)$ is lifted.

\paragraph{Summary:} Hence we see, we have managed to lift two of the most restrictive selection rule on the Clebsch-Gordan symbol, now with the new interaction term, it has non-zero contribution as long as the sum of both $m$ and $l$ are even, which is not such a restrictive condition. By no means is this choice unique, we could in principle choose any interaction term that integrates to object with little selection rule, hence improving the performance of the rotationally invariant kernels. 

\subsection{results}
Here we list some of the results we obtained in various dimensions, how this anisotropic term helps rescuing the performance of the Gaussian RBF kernel.

\begin{figure*}[ht]
    \centering
    \includegraphics[width=0.75\linewidth]{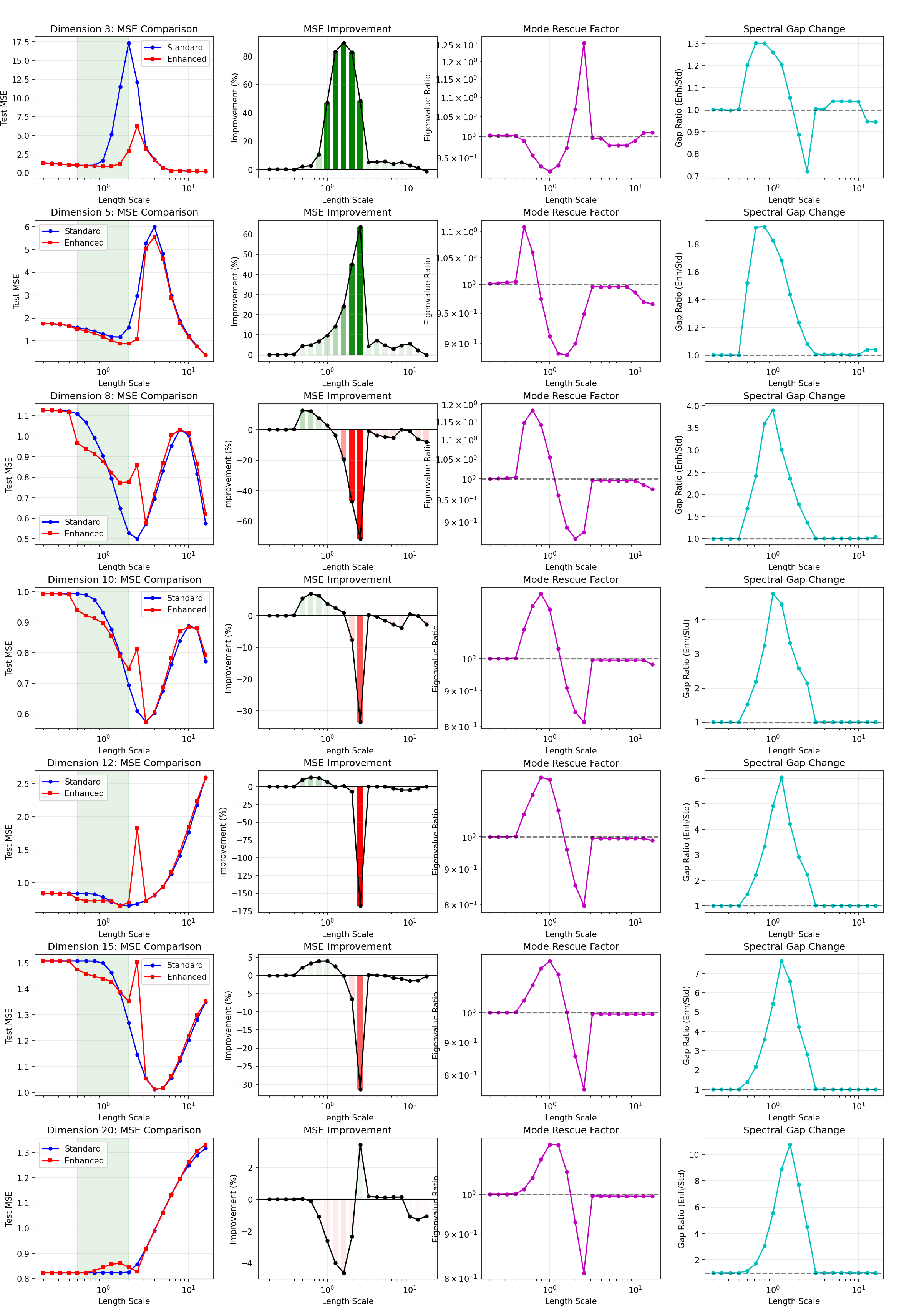}
    \caption{MSE comparison across dimension and RBF kernel bandwidth.}
    \label{fig:anisotropic}
\end{figure*}

As shown in figure \ref{fig:anisotropic}, it does help in lower dimension when kernels themselves do not vanish, even this depends on the length scale of the RBF kernel. For some choices the additional term even hurts the performance. Although there is a lift in coefficients, due to the vanishing of the kernels themselves in high dimensions, this still fails inevitably.

%%%%%%%%%%%%%%%%%%%%%%%%%%%%%%%%%%%%%%%%%%%%%%%%%%%

\section{Experiments}\label{appendix:experiments}

We introduce several datasets and summerize the performance of the perturbative method against kernel IV and Deep IV\footnote{All results on Deep IV was done using a two layered approximately 2k parameters neural net trained for 500 epochs for each regression stage.}. We note that all tests are done repeatedly 1000 times and the average RMSE was recorded here as the result for comparison.  

\subsection{Primary experiment dataset}\label{sec:dataset}
The primary dataset is constructed to match the experimental setup in \cite{donhauser2022fast}, with an instrumental variable twist. 

For a given sample size $n$ and dimensionality parameter $\beta$, we set $d = \lfloor n^\beta \rfloor$ and generate the following data. First the instrumental variable $Z_i \sim \mathcal{N}(0, I_d)$, then the confounding term with correlation across dimensions: $U_{X,i} \sim \mathcal{N}(0, I_d)$. Endogenous variables $X_i$ with nonlinear dependence on $Z$:
\begin{equation}
X_i = 0.7Z_i + 0.3Z_i^2 + 0.2U_{X,i}\,,
\end{equation}
where the operations are element-wise. Ground truth function combines nonlinear effects from multiple dimensions:
\begin{equation}
g(X_i) = 0.5\sin\left(\sum_{j=1}^{10} w_j X_{i,j}\right) + 0.5e^{-0.1 \sum_{j=11}^{20} w_j X_{i,j}^2}\,,
\end{equation}
where $w_j = 1/\sqrt{j}$ are dimension-specific weights. Effectively the useful feature dimensions are the first $20$ dimensions. Dimensions beyond $20$ are essentially noise dimensions, which creates a challenging scenario with high condition number for kernel ridge regression in NPIV, essentially kernel function needs to learn to ignore these noise.  
Finally we add error term with endogeneity:
\begin{equation}
\epsilon_{Y,i} = 0.8 \cdot \bar U_{X,i} + 0.2 \cdot \nu_i\,,
\end{equation}
where $\bar{U}_{X,i} = \frac{1}{d}\sum_{j=1}^{d} U_{X,i,j}$ is the average error across dimensions and $\nu_i \sim \mathcal{N}(0, 1)$ with finally outcome:
\begin{equation}
    Y_i = g(X_i) + \epsilon_{Y,i}\,.
\end{equation}
This construction creates a challenging high-dimensional regression problem where the true function depends nonlinearly on multiple dimensions, endogeneity is present through correlated errors and the dimensionality parameter $\beta$ controls how quickly dimension grows with sample size.

We tested across various dimensionality regimes by setting dimension $d \approx n^\beta$ for $\beta \in \{0.3, 0.5, 0.7, 0.9, 1.1\}$. This allowed us to observe how performance varies as the dimension increases relative to sample size. We tested the following parameters to identify optimal values for base perturbation parameter $\gamma \in \{0.4, 0.6, 0.8, 1.0\}$.

\subsection{Kernels}
Gaussian RBF kernels are simply:
\begin{equation}
    K_{\text{RBF}}(x, x') = \exp\left(-\frac{\|x-x'\|^2}{2\sigma^2}\right)\,,
\end{equation}
where $\sigma > 0$ is the bandwidth parameter which we test in logarithmic scale $0.01, 0.1, 1, 10$. It is rotationally invariant and has been proved to suffer from curse of dimensionality in \cite{donhauser2022fast}.

% polynomial kernels are defined as:
% \begin{equation}
%     K_{\text{poly}}(x, x') = (1+\langle x, x'\rangle)^d
% \end{equation}
% where $\langle\cdot, \cdot\rangle$ denotes the input space inner products and $d$ is the polynomial degree. We choose $d=2$ in our experiments. 

The Fractional Brownian kernel is based on fractional Brownian motion covariance functions. For points $x, x' \in \mathbb{R}^d$, the kernel reads:
\begin{equation}
    K_q(x, x') = \frac{1}{2}\left(\|x\|^q + \|x'\|^q - \|x - x'\|^q\right)\,,
\end{equation}
where $0 < q \leq 2$ is the fractional parameter. It explicitly breaks rotational invariance by choosing a reference point (here we chose the origin). This kernel belongs to the class of fractional Brownian kernels and corresponds to the covariance structure of fractional Brownian motion with Hurst parameter $H = q/2$. The kernel captures long-range dependence when $q > 1$ and exhibits different smoothness properties compared to standard RBF kernels \cite{Sejdinovic_2013, Rizzo-Sejdinovic2016}. In our implementation, we use $q = 1$ (corresponding to $H = 0.5$, standard Brownian motion). We shall refer to this kernel as the fractional Brownian kernel in the experiment section.

\subsection{high dimensional results}\label{appendix:detailed_results}
Here we include the performance of our algorithm across different kernels, dimensionalities and renormalization strength in table \ref{tab:order0_rmse} and \ref{tab:renormalized_rmse}.

% Table 1: Order 0 RMSE (Baseline) - All Kernels
\begin{table*}[htbp]
\centering
\caption{Order 0 RMSE (Baseline) Across All Kernels and Deep IV}
\label{tab:order0_rmse}
\begin{tabular}{c|cccc|cccc|cccc|c}
\toprule
\multirow{1}{*}{$\beta$} & \multicolumn{1}{c|}{\textbf{RBF}} & \multicolumn{1}{c}{\textbf{fractional Brownian}} & \multicolumn{1}{c}{\textbf{Deep IV}} \\
\midrule
0.3 & 0.317  & 0.235 &  0.121\\
0.5 & 0.536  & 0.173 & 0.187 \\
0.7 & 0.289  & 0.382 & 0.225 \\
1.0 & 0.360  & 1.654 & 0.089 \\
1.3 & 0.511  & 3.766 & 0.078 \\
1.5 & 0.467  & 11.51 & 0.096 \\
\bottomrule
\end{tabular}
\end{table*}

% Table 2: Renormalized RMSE - All Kernels
\begin{table*}[htbp]
\centering
\caption{Renormalized RMSE Across All Kernels}
\label{tab:renormalized_rmse}
\begin{tabular}{c|cccc|cccc}
\toprule
\multirow{2}{*}{$\beta$} & \multicolumn{4}{c|}{\textbf{RBF}} & \multicolumn{4}{c}{\textbf{Frac. Brownian}} \\
& \multicolumn{4}{c|}{$\gamma$} & \multicolumn{4}{c}{$\gamma$} \\
& 0.4 & 0.6 & 0.8 & 1.0 & 0.4 & 0.6 & 0.8 & 1.0 \\
\midrule
0.3 & 0.318 & 0.318 & 0.319 & 0.319 & 0.238 & 0.240 & 0.241 & 0.243 \\
0.5 & 0.536 & 0.536  & 0.536  & 0.536 & 0.182 & 0.187 & 0.193 & 0.992 \\
0.7 & 0.289 & 0.289 & 0.289 & 0.289 & 0.233 & 0.181 & 0.145 & 0.124 \\
1.0 & 0.360 & 0.360 & 0.360 & 0.360 & 0.322 & 0.108 & 0.194 & 0.556 \\
1.3 & 0.511 & 0.511 & 0.511 & 0.511 & 0.513 & 0.028 & 0.327 & 1.370 \\
1.5 &  0.465 & 0.465 & 0.465 & 0.465 & 1.935 & 0.136 & 2.274 & 10.99 \\
\bottomrule
\end{tabular}
\end{table*}

\subsection{Further datasets results}\label{subappendix:further_datasets}
We evaluate our methods on several alternative NPIV datasets beyond the one from \cite{donhauser2022fast} described in appendix \ref{sec:dataset}. Each dataset is generated with $n=80$ samples with results summerized in table \ref{tab:alternative_datasets_rmse2}.

\paragraph{Newey-Powell Dataset.} A classic NPIV benchmark following \cite{newey2003instrumental}:
\begin{align}
Z_i &\sim \text{Uniform}(-3, 3), \quad V_i \sim \mathcal{N}(0, 1) \\
X_i &= 0.6 Z_i + 0.4 V_i + 0.2 \nu_i, \quad \nu_i \sim \mathcal{N}(0, 1) \\
g(X) &= \sin(X) + 0.5 X \\
Y_i &= g(X_i) + 0.5 V_i + 0.5 \epsilon_i, \quad \epsilon_i \sim \mathcal{N}(0, 1)
\end{align}

\paragraph{Weak/Strong Instrument Dataset.} Tests robustness to instrument strength $\rho \in \{0.1, 0.3, 0.7\}$ with $d=5$:
\begin{align}
Z_i, U_i &\sim \mathcal{N}(0, I_d) \\
X_i &= \rho Z_i + (1-\rho) U_i + 0.1 \nu_i, \quad \nu_i \sim \mathcal{N}(0, I_d) \\
g(X) &= 0.5 \sin\left(\sum_{j=1}^{d} w_j X_j\right) + 0.3 \cos\left(\sum_{j=1}^{2} w_j X_j\right) \\
Y_i &= g(X_i) + 0.6 \bar{U}_i + 0.4 \epsilon_i
\end{align}
where $w_j = 1/\sqrt{j}$ and $\bar{U}_i = d^{-1}\sum_{j=1}^d U_{ij}$.

\paragraph{Heteroscedastic Dataset.} Features variance that depends on $X$ with $d=10$:
\begin{align}
Z_i, U_i &\sim \mathcal{N}(0, I_d), \quad X_i = 0.7 Z_i + 0.3 U_i \\
g(X) &= \sin\left(\sum_{j=1}^{5} w_j X_j\right) + 0.3 \exp\left(-0.1 \sum_{j=6}^{d} X_j^2\right) \\
\sigma^2(X) &= 0.5 + 0.5 |\bar{X}|, \quad Y_i = g(X_i) + 0.5 \bar{U}_i + \sigma(X_i) \epsilon_i
\end{align}

\paragraph{Nonlinear Instrument Dataset.} Tests nonlinear first-stage relationships with $d=10$:
\begin{align}
Z_i, U_i &\sim \mathcal{N}(0, I_d) \\
X_i &= 0.5 Z_i + 0.3 Z_i^2 + 0.2 \sin(Z_i) + 0.2 U_i \\
g(X) &= 0.4 \tanh\left(\sum_{j=1}^{5} w_j X_j\right) + 0.3 \cos\left(\sum_{j=6}^{10} w_j X_j\right) \\
Y_i &= g(X_i) + 0.6 \bar{U}_i + 0.4 \epsilon_i
\end{align}

\paragraph{Sparse Signal Dataset.} High-dimensional setting ($d=50$) with only $s=5$ active dimensions:
\begin{align}
Z_i, U_i &\sim \mathcal{N}(0, I_d), \quad X_i = 0.7 Z_i + 0.3 U_i \\
g(X) &= \sin\left(\frac{1}{\sqrt{s}}\sum_{j=1}^{s} X_j\right) + 0.3 \cos(2 X_1) \\
Y_i &= g(X_i) + 0.5 \bar{U}_{1:s,i} + 0.5 \epsilon_i
\end{align}
where $\bar{U}_{1:s,i} = s^{-1}\sum_{j=1}^s U_{ij}$ averages only the active dimensions.

\begin{table*}[htbp]
\centering
\caption{RMSE Comparison on Alternative NPIV Datasets}
\label{tab:alternative_datasets_rmse2}
\begin{tabular}{l|ccc}
\toprule
\textbf{Dataset} & \textbf{DeepIV} & \textbf{Order 0} & \textbf{Best Pert.} \\
\midrule
Newey-Powell & \textbf{1.130} & 14.858 & 8.718  \\
Weak IV ($\rho=0.1$) & 0.347 & 2.779 & \textbf{0.280}  \\
Weak IV ($\rho=0.3$) & \textbf{0.289} & 2.165 & 0.339 \\
Strong IV ($\rho=0.7$) & \textbf{0.255} & 2.175 & 0.329 \\
Heteroscedastic & \textbf{0.678} & 7.969 & 0.724  \\
Nonlinear IV & \textbf{0.388} & 14.986 & 0.409  \\
Sparse ($d=50$) & 0.586 & 13.833 & \textbf{0.560}  \\
\bottomrule
\end{tabular}
\footnotesize
\begin{tablenotes}
\item RMSE = $\sqrt{\text{MSE}}$; ``Order 0'' is standard kernel ridge IV; ``Best Pert.'' is the best result across $\gamma \in \{0.4, 0.6, 0.8, 0.99\}$ and kernels $\in$ \{FB, RBF\}. Bold indicates best method per dataset.
\end{tablenotes}
\end{table*}

In table \ref{tab:alternative_datasets_rmse2}, we note the improvement of the perturbative method on order $0$ base kernel IV across all these datasets. Although DeepIV performs well on most datasets, the perturbative method shows competitive or superior performance in two notable cases: (1) the weak instrument setting ($\rho=0.1$), where both neural network training and kernel IV struggle but the perturbative correction provides meaningful improvement; and (2) the sparse high-dimensional setting ($d=50$), where the perturbative approach's ability to enhance kernel expressivity in specific eigendirections proves advantageous. The Order 0 baseline (standard kernel ridge regression) performs poorly across all datasets, highlighting the importance of addressing the ill-posedness inherent in NPIV problems.

\subsection{Weak Instrument Analysis}\label{sec:weak_instruments}

A fundamental challenge in instrumental variable estimation is the \textit{weak instrument problem}, which occurs when instruments $Z$ have low correlation with the endogenous variable $X$. Weak instruments lead to biased estimates, inflated standard errors, and unreliable inference \citep{Bound1995, Staiger1997}. The first-stage $R^2$---measuring the proportion of variance in $X$ explained by $Z$---serves as a diagnostic: values below 0.1 are conventionally considered indicative of weak instruments \citep{Stock01102002}.

We systematically vary the instrument strength parameter $\rho \in [0.05, 0.9]$ in our weak instrument dataset (Section~\ref{subappendix:further_datasets}) to examine how different NPIV methods degrade as instruments weaken. The first-stage relationship is $X = \rho Z + (1-\rho) U + \text{noise}$, so $\rho$ directly controls instrument relevance. We summerize the results in table \ref{tab:weak_iv_rmse}.

\begin{table*}[htbp]
\centering
\caption{RMSE Across Instrument Strength Levels}
\label{tab:weak_iv_rmse}
\begin{tabular}{cc|ccc}
\toprule
\textbf{Strength} $\rho$ & \textbf{1st Stage} $R^2$ & \textbf{DeepIV} & \textbf{Order 0} & \textbf{Best Pert.} \\
\midrule
0.05 & 0.076 & \textbf{0.709} & 2.450 & 2.250  \\
0.10 & 0.072 & \textbf{0.882} & 2.265 & 1.999  \\
0.20 & 0.086 & \textbf{0.551} & 1.944 & 1.644 \\
0.30 & 0.144 & \textbf{0.470} & 1.732 & 1.463  \\
0.50 & 0.439 & \textbf{0.390} & 1.762 & 1.506  \\
0.70 & 0.810 & \textbf{0.359} & 2.446 & 1.826  \\
0.90 & 0.974 & \textbf{0.312} & 3.742 & 1.491 \\
\bottomrule
\end{tabular}
\footnotesize
\begin{tablenotes}
\item RMSE = $\sqrt{\text{MSE}}$. First-stage $R^2 < 0.1$ indicates weak instruments (shaded region $\rho \leq 0.2$). Bold indicates best method.
\end{tablenotes}
\end{table*}

As shown in table \ref{tab:weak_iv_rmse}, perhaps unsurprisingly, DeepIV dominates across all instrument strengths. The neural network's flexible function approximation proves advantageous even with weak instruments, though its RMSE increases from 0.312 ($\rho=0.9$) to 0.882 ($\rho=0.1$)---a 2.8$\times$ degradation. Compared to order $0$ kernel IV with regularization, perturbative corrections helps to reduce Order 0 RMSE by 18--37\% across settings, demonstrating the value of higher-order corrections even in challenging regimes. However, it cannot match DeepIV's performance.

\subsection{Sensitivity Analysis}\label{sec:sensitivity}

The perturbative renormalization framework introduces several hyperparameters. We conduct a systematic sensitivity analysis on the Donhauser dataset with $\beta=1.0$ ($d=60$, $n=60$) to understand their effects and identify robust default values.

We study parameters across the range, base perturbation coupling constant $\gamma \in \{0.1, 0.2, \ldots, 0.9, 0.95, 0.99\}$, Tikhonov regularization strength $\lambda \in \{0.01, 0.05, 0.1, 0.3, 0.6, 1.0, 2.0\}$, maximum perturbative order $N_{\max} \in \{2, 3, 4, 5, 6, 7\}$, summerized in table \ref{tab:sensitivity_summary}.

\begin{table*}[htbp]
\centering
\caption{Sensitivity Analysis: Optimal Parameters and RMSE Ranges}
\label{tab:sensitivity_summary}
\begin{tabular}{l|c|cc|c}
\toprule
\textbf{Parameter} & \textbf{Optimal Value} & \textbf{Best RMSE} & \textbf{Worst RMSE} & \textbf{Sensitivity} \\
\midrule
$\gamma$ (coupling) & 0.6 & 0.329 & 0.847 & High \\
$\lambda$ (regularization) & 0.6 & 0.329 & 0.512 & Medium \\
$N_{\max}$ (max order) & 5 & 0.329 & 0.398 & Low \\
\bottomrule
\end{tabular}
\footnotesize
\begin{tablenotes}
\item Sensitivity rated as: High ($>$50\% RMSE variation), Medium (20--50\%), Low ($<$20\%).
\end{tablenotes}
\end{table*}

As shown in table \ref{tab:sensitivity_summary}, the coupling constant $\gamma$ is the most sensitive parameter, as is confirmed in the result in table \ref{tab:renormalized_rmse}. The optimal value $\gamma^* \approx 0.6$ reflects a bias-variance tradeoff: small $\gamma$ underweights higher-order corrections (high bias), while $\gamma \to 1$ amplifies potentially divergent terms (high variance).

Moderate sensitivity with Regularization $\lambda$ with optimal $\lambda^* = 0.6$. Too little regularization ($\lambda < 0.1$) allows ill-conditioned kernel inversions; too much ($\lambda > 1$) over-smooths the solution.

Performance is stable for $N_{\max} \geq 4$, indicating that most information is captured in the first few orders. This aligns with asymptotic theory: optimal truncation typically occurs at $N^* = O(\gamma^{-1})$.

\paragraph{Recommended Defaults.} Based on this analysis, we recommend: $\gamma = 0.6$, $\lambda = 0.6$, $N_{\max} = 5$. These values perform well across the datasets in Sections~\ref{sec:experiments}--\ref{subappendix:further_datasets} without dataset-specific tuning.

\subsection{Larger sample experiments}\label{subappendix:largersampleexperiments}
Here we include a further set of experiments with larger sample size $(n=120)$ instead $n=80$. We summerize the results here in table \ref{tab:large_dataset_rmse}.

\begin{table*}[htbp]
\centering
\caption{RMSE Comparison on Alternative NPIV Datasets}
\label{tab:large_dataset_rmse}
\begin{tabular}{l|ccc}
\toprule
\textbf{Dataset} & \textbf{DeepIV} & \textbf{Order 0} & \textbf{Best Pert.} \\
\midrule
Newey-Powell & \textbf{1.388} & 3.444 & 2.129  \\
Heteroscedastic & \textbf{0.599}  & 5.909 & 0.716  \\
Nonlinear IV & 0.417 & 16.721 &\textbf{0.407} \\
Sparse ($d=50$) & \textbf{0.551} & 6.526 & 0.642  \\
\bottomrule
\end{tabular}
\footnotesize
\begin{tablenotes}
\item RMSE = $\sqrt{\text{MSE}}$; ``Order 0'' is standard kernel ridge IV; ``Best Pert.'' is the best result across $\gamma \in \{0.4, 0.6, 0.8, 0.99\}$ and kernels $\in$ \{FB, RBF\}. Bold indicates best method per dataset.
\end{tablenotes}
\end{table*}
%%%%%%%%%%%%%%%%%%%%%%%%%%%%%%%%%%%%%%%%%%%%%%%%%%%%%%%%%%%%%%%%%%%%%%%%%%%%%%%
%%%%%%%%%%%%%%%%%%%%%%%%%%%%%%%%%%%%%%%%%%%%%%%%%%%%%%%%%%%%%%%%%%%%%%%%%%%%%%%

\end{document}